\documentclass[]{malab}
\usepackage{wrapfig}
\usepackage{hyperref}
\definecolor{citeblue}{RGB}{35, 140, 190}
\definecolor{refred}{RGB}{210, 70, 60}

\hypersetup{
    colorlinks=true,
    citecolor=citeblue,
    linkcolor=refred,
    urlcolor=citeblue
}
\usepackage{url}
\usepackage{pifont}
\usepackage{amsmath}
\usepackage{tabularx}
\usepackage{mathtools}
\usepackage{bm}
\usepackage{multirow}
\usepackage{booktabs}
\usepackage{colortbl}
\usepackage{xcolor}
\usepackage{graphicx}
\usepackage{pgf}
\usepackage{caption}
\usepackage[most]{tcolorbox}

\newcommand{\cmark}{\ding{51}}
\newcommand{\xmark}{\ding{55}}

\DeclareMathOperator*{\argmax}{arg\,max}
\definecolor{exampleborder}{RGB}{48,68,86}
\definecolor{exampletitle}{RGB}{48,68,86}
\definecolor{examplebg}{RGB}{248,249,250}
\definecolor{gaincolor}{RGB}{40,125,125}  
\definecolor{dropcolor}{RGB}{180,85,95}   
\definecolor{samecolor}{RGB}{120,120,120} 
\newtcolorbox{qualexample}[1]{
    enhanced,
    colback=examplebg,
    colframe=exampleborder,
    coltitle=white,
    colbacktitle=exampletitle,
    fonttitle=\bfseries,
    title={#1},
    boxed title style={
        sharp corners,
        boxrule=0pt,
    },
    attach boxed title to top left={
        xshift=7mm,
        yshift=-2.5mm
    },
    top=6mm,
    bottom=3mm,
    left=4mm,
    right=4mm,
    boxrule=0.8pt,
    arc=2.5mm,
    before skip=8pt,
    after skip=8pt,
    breakable=false,
}
\definecolor{appendixblue}{RGB}{28,63,130}

\theoremstyle{definition}

\theoremstyle{plain}
\newtheorem{remark}{Remark}
\newtheorem{theorem}{Theorem}
\newtheorem{lemma}{Lemma}
\newtheorem{assumption}
{Assumption}

\title{Learning an Anchored Prompt Space for Continual Adaptation of Large Language Models}
\shorttitle{Learning an Anchored Prompt Space for Continual Adaptation of Large Language Models}

\author[1,2]{Rongguang Ye}
\author[1]{Zhan Zhuang}
\author[3]{Yichen Wu}
\author[2]{Ming Tang}
\author[1]{Kede Ma}
\affiliation[1]{City University of Hong Kong}
\affiliation[2]{Southern University of Science and Technology}
\affiliation[3]{Harvard University }

\abstract{
Continually adapting large language models requires acquiring new knowledge while
preserving previously learned capabilities. Jointly adapting model parameters and task-specific soft prompts offers a promising solution, but faces two key limitations: historical prompts may
become less effective as the model evolves, while their transferable
cross-task relationships are not explicitly learned. We propose Learning an Anchored Prompt Space
(LAPS), which preserves historical prompt effectiveness and learns relationships
among task-specific soft prompts to facilitate positive transfer.
LAPS first aligns historical prompts with the updated model through
self-distillation. LAPS then constructs an anchored prompt space whose vertices correspond to learned task-specific soft prompts and whose intermediate geometry is shaped by learnable B\'ezier control prompts. Once
this anchored prompt space is learned, LAPS identifies the best-performing
prompt for each task on its validation set, allowing the optimized prompt to
draw on knowledge acquired from observed tasks.
Experiments on the TRACE benchmark across three Qwen3 model scales show that
LAPS consistently outperforms distillation-based, prompt-based, and joint prompt--parameter adaptation baselines, improving average performance while
reducing forgetting. 
}

\date{\today}
\authoremails{
\href{mailto:rongguaye2-c@my.cityu.edu.hk}{\texttt{\{rongguaye2-c,zhazhuang3-c\}@my.cityu.edu.hk}},
\email{yiwu6@mgh.harvard.edu}
}
\correspondence{\email{tangm3@sustech.edu.cn}, \email{kede.ma@cityu.edu.hk}}

\projectpage{\url{https://github.com/rG223/LAPS}}
\begin{document}

\maketitle


\section{Introduction}
Biological learning achieves efficient adaptation by combining plastic mechanisms that incorporate
new evidence with stabilizing mechanisms that protect useful prior structure, rather than rewriting an
entire system for each new experience
\citep{grossberg1980adaptive,mccloskey1989catastrophic,french1999catastrophic}.
Complementary learning systems and the stability--plasticity trade-off provide
classic accounts of how learning can balance acquisition and retention
\citep{mcclelland1995why,kirkpatrick2017overcoming}. A similar pressure arises
in large language model (LLM) deployment: after pretraining and instruction
tuning, LLMs must adapt to new tasks, domains, policies, and user requirements
without full retraining
\citep{wang2023orthogonal,ibrahim2024simple,yang2025recent}. Practical
adaptation must incorporate task-specific evidence from limited data while
retaining capabilities acquired from previous tasks, making continual learning
central to LLM adaptation.

Recent studies on continual adaptation of LLMs have explored two
complementary mechanisms: model parameter updates
\citep{peng2024scalable,nayak2026sculpting} and task-specific soft prompts
\citep{razdaibiedina2023progressive,qiu2025continual}. Updating model
parameters provides substantial capacity for acquiring task knowledge, but
it can also interfere with prior knowledge. In contrast,
prompt-based methods preserve task-specific behaviors without modifying the
underlying model, although this limits adaptation capacity. To balance
these strengths, later methods jointly adapt model parameters and prompts
\citep{wang2024two,tiwari2026learningfastslowllms}. Yet joint adaptation
introduces an additional source of instability: historical prompts were
optimized for earlier model parameters, and as model parameters evolve, these prompts may no longer induce their original behaviors.

In addition to maintaining prior task behaviors, continual adaptation also creates opportunities
for positive transfer across related tasks. If knowledge acquired from previous
tasks can be organized and reused, later tasks may benefit from earlier
adaptations rather than treating each task in isolation. Existing approaches
have explored prompt retrieval, sharing, and composition to reuse knowledge
across tasks
\citep{wang2022learning,smith2023coda,gao2024consistent,han2025semantic,hong2025rainbowprompt}. However, as the model evolves, these methods do not explicitly organize
historical task-specific prompts into a learnable structure for discovering
transferable cross-task relationships.

To address these limitations, we propose Learning an Anchored Prompt
Space (LAPS), a continual adaptation method that preserves historical task
adaptations under evolving model parameters while identifying prompts that
exploit positive transfer across tasks. LAPS contains two key elements.
First, LAPS aligns historical prompts with the evolving LLM through
on-policy self-distillation
\citep{shenfeld2026selfdistillation,zhao2026selfdistilled}. When model
parameters are updated for a new task, historical prompts are
adjusted so that they remain close to their original task behaviors under the
updated model. By restoring prior behaviors through the prompts rather than
constraining the model update, this alignment leaves greater flexibility
for new-task adaptation.
Second, LAPS organizes task-specific knowledge encoded by soft prompts into an
anchored prompt space. Learned task-specific soft prompts serve as vertex
anchors for each observed task, and a degree-$k$
B\'ezier simplex introduces learnable control prompts that shape the geometry between these anchors. Rather than relying on direct prompt combination,
LAPS learns this anchored space from cross-task interactions and then searches
it to identify high-performing prompts that exploit positive transfer.

Our contributions are summarized as follows:
\vspace{-5pt}
\begin{itemize}
    \item We propose a continual adaptation method called LAPS that introduces
historical prompt alignment and an anchored prompt space. The alignment preserves
    historical task behaviors under continued model
updates without constraining backbone adaptation to new tasks

    \item We introduce a two-stage optimization strategy for the anchored prompt
space parameterized by a degree-$k$ B\'ezier simplex: learning control prompts
from cross-task interactions and selecting task-specific simplex coordinates
by validation performance, enabling
the discovery of positive transfer across observed tasks.
    \item We evaluate LAPS on the TRACE benchmark~\citep{wang2023trace} across Qwen3 models from
    1.7B to 8B. LAPS consistently achieves a favorable adaptation--retention trade-off against distillation-based, prompt-based, and joint prompt--parameter adaptation methods.
\end{itemize}

\section{Related Work}

In this section, we review continual learning methods most closely related to
LAPS: replay, knowledge distillation, and prompt-based adaptation. We focus on
where reusable knowledge is stored, how prior behavior is maintained, and
whether positive transfer is supported.

\noindent\textbf{Replay-based Continual Learning.}
Replay-based methods mitigate catastrophic forgetting by retaining and revisiting previous
data when learning new tasks.
Experience Replay~\citep{isele2018selective,rolnick2019experience,fedus2020revisiting}
shows that even a small memory of historical samples can help preserve
previous knowledge. Subsequent work improves replay by storing additional
training signals, such as historical logits~\citep{buzzega2020dark}, or
by learning generative models that reproduce previous task knowledge~\citep{gao2023ddgr,maekawa2023generative}. For LLM adaptation, OPR~\citep{chen2026policy} further studies on-policy replay
data for sequential updates. These methods rely on stored or regenerated historical information, while
directly mixing replay data into model updates may interfere with current task
adaptation. LAPS instead uses limited replay
data to supervise historical prompt alignment and anchored prompt space
learning, supporting retention and cross-task transfer while avoiding direct interference with new task model updates.

\vspace{-2pt}
\noindent\textbf{Knowledge Distillation-based Continual Learning.}
Knowledge distillation mitigates forgetting by encouraging the updated model to
match selected behaviors of an earlier model. LwF~\citep{li2017learning}
matches output distributions, while later continual learning methods distill
soft predictions or intermediate representations~\citep{buzzega2020dark,douillard2020podnet}.
Recent work further studies self-distillation for continual learning and
restoration-style distillation for LLM behavior preservation~\citep{dong2025longred,shenfeld2026selfdistillation}. However, behavioral constraints that preserve earlier knowledge can also limit
the acquisition of new task knowledge. LAPS instead applies self-distillation to historical prompts rather than
constraining backbone updates. This leaves greater flexibility for new-task
adaptation, while historical behaviors affected by model parameter updates are
recovered through the aligned prompts.

\vspace{-2pt}
\noindent\textbf{Prompt-based Continual Learning.}
Prompt-based continual learning stores task knowledge in learnable prompts.
One common setting introduces task-specific prompts while typically keeping the
pretrained backbone frozen. L2P~\citep{wang2022learning} learns a prompt pool
for task-aware retrieval, enabling prompts to be reused across observed tasks.
DualPrompt~\citep{wang2022dualprompt} separates shared and task-specific
prompts to capture common and task-dependent information. ProgPrompt
~\citep{razdaibiedina2023progressive} learns a new prompt for each task and
concatenates it with previous prompts, allowing later tasks to reuse earlier
prompt knowledge. CODA-Prompt~\citep{smith2023coda} decomposes prompts into
reusable factors and learns attention-based combinations of these factors,
providing a more flexible form of prompt composition.
This prompt-only line of work reduces forgetting by storing task knowledge in
learnable prompts, but keeping the backbone fixed limits the capacity for
new-task adaptation through model parameter updates.

Another line of work combines prompt learning with model parameter updates.
ProMoT~\citep{wang2024two} first learns task-specific prompts and then updates
model parameters. Subsequent methods combine meta prompting with low-rank
parameter updates~\citep{yu2025fm} or use fast--slow adaptation mechanisms
~\citep{tiwari2026learningfastslowllms}. Jointly adapting prompts and model
parameters is difficult because historical prompts are learned under earlier
model states and may become less compatible as model parameters evolve. Existing
prompt pools, prompt composition methods, and meta prompting approaches do not
explicitly align historical prompts under changing model parameters or learn a
controlled prompt space for positive transfer. LAPS addresses this setting by aligning
historical prompts under the current model, learning B\'ezier control prompts
to shape an anchored prompt space, and selecting task-specific simplex
coordinates according to validation performance.

\section{Learning an Anchored Prompt Space for Continual Adaptation}
\label{sec:method}

In this section, we first formulate continual adaptation with joint
prompt--parameter updates and identify the two difficulties that motivate LAPS.
We then describe how LAPS aligns historical soft prompts, learns an anchored prompt
space, and selects the final soft prompt for each observed task.

\subsection{Problem Setup and Overview}
\label{sec:problem_setup}

We study continual adaptation over a sequence of tasks
$\mathcal{T}_{1:T}=(\mathcal{T}_1,\ldots,\mathcal{T}_T)$. Each task
$\mathcal{T}_t$ is specified by a training set
$\mathcal{D}_t=\{(\bm{x}_t^{(i)},y_t^{(i)})\}_{i=1}^{N_t}$, where
$\bm{x}_t^{(i)}$ and $y_t^{(i)}$ denote the input and target output of the
$i$-th example from task $t$. The tasks are learned sequentially, with the full training data of previous
tasks unavailable when adapting to $\mathcal{T}_t$. After each learning stage, the model is evaluated on all observed
tasks. We follow the task-incremental setting~\citep{van2022three}, where the identity
of the current task is known at inference time and can therefore be used to
select the corresponding task-specific prompt.

Following recent studies~\citep{wang2024two,tiwari2026learningfastslowllms},
we consider two coupled adaptation components at stage $t$: the model
parameters $\bm{\theta}_t$ and a task-specific soft prompt. We use
$\mathbf{P}_{j,t}$ to denote the soft prompt for task $\mathcal{T}_j$ after
adaptation stage $t$, with $\mathbf{P}_{t,t}$ denoting the prompt newly learned
for the current task $\mathcal{T}_t$. In the standard joint adaptation setting, each
task-specific soft prompt is frozen after it is learned, \ie,
$\mathbf{P}_{j,t}=\mathbf{P}_{j,j}$ for all $t\geq j$. A basic joint
adaptation objective therefore optimizes the model parameters
$\bm{\theta}_t$ and the current task prompt $\mathbf{P}_{t,t}$ on
$\mathcal{D}_t$:
\begin{equation}
\bm{\theta}_t,\mathbf{P}_{t,t}
=
\arg\min_{\bm{\theta},\mathbf{P}}
\mathbb{E}_{(\bm{x},y)\sim\mathcal{D}_t}
\left[
\ell_t
\left(
f(\bm{x};\bm{\theta},\mathbf{P}),
y
\right)
\right], \quad t=1,2,...,T.
\label{eq:joint_adaptation}
\end{equation}
where $f(\bm{x};\bm{\theta},\mathbf{P})$ denotes the
model prediction for input $\bm{x}$ under parameters $\bm{\theta}$ and soft
prompt $\mathbf{P}$, and $\ell_t$ denotes the task-specific supervised loss. 
LAPS instantiates this adaptation step through the alternating optimization
strategy of ProMoT~\citep{wang2024two}. For task $\mathcal{T}_t$, LAPS first
learns the current task prompt $\mathbf{P}_{t,t}$ while keeping the previous model parameters
$\bm{\theta}_{t-1}$ fixed:
\begin{equation}
\mathbf{P}_{t,t}
=
\arg\min_{\mathbf{P}}
\mathbb{E}_{(\bm{x},y)\sim\mathcal{D}_t}
\left[
\ell_t
\left(
f(\bm{x};
\bm{\theta}_{t-1},
\mathbf{P}),
y
\right)
\right].
\label{eq:prompt_learning}
\end{equation}
After solving Eq.~(\ref{eq:prompt_learning}), LAPS fixes
$\mathbf{P}_{t,t}$ and updates the model parameters on the current task:
\begin{equation}
\bm{\theta}_{t}
=
\arg\min_{\bm{\theta}}
\mathbb{E}_{(\bm{x},y)\sim\mathcal{D}_t}
\left[
\ell_t
\left(
f(\bm{x};
\bm{\theta},
\mathbf{P}_{t,t}),
y
\right)
\right].
\label{eq:model_update}
\end{equation}
After this alternating update, LAPS has the current prompt and model state.
However, the adaptation process still faces two challenges. First, updating
model parameters from $\bm{\theta}_{t-1}$ to $\bm{\theta}_t$ for task $t$
can cause model drift, making prompts learned for earlier tasks,
$\{\mathbf{P}_{j,t-1}\}_{j<t}$, less compatible with the updated model.
Second, although task-specific soft prompts encode useful task knowledge, their
transferable relationships are not explicitly organized as the model evolves. LAPS addresses these challenges through the three components illustrated in
Fig.~\ref{fig:overview}: historical prompt alignment, anchored prompt space
learning, and validation-guided prompt selection.

\begin{figure}[t]
\centering
\includegraphics[width=0.95\linewidth]{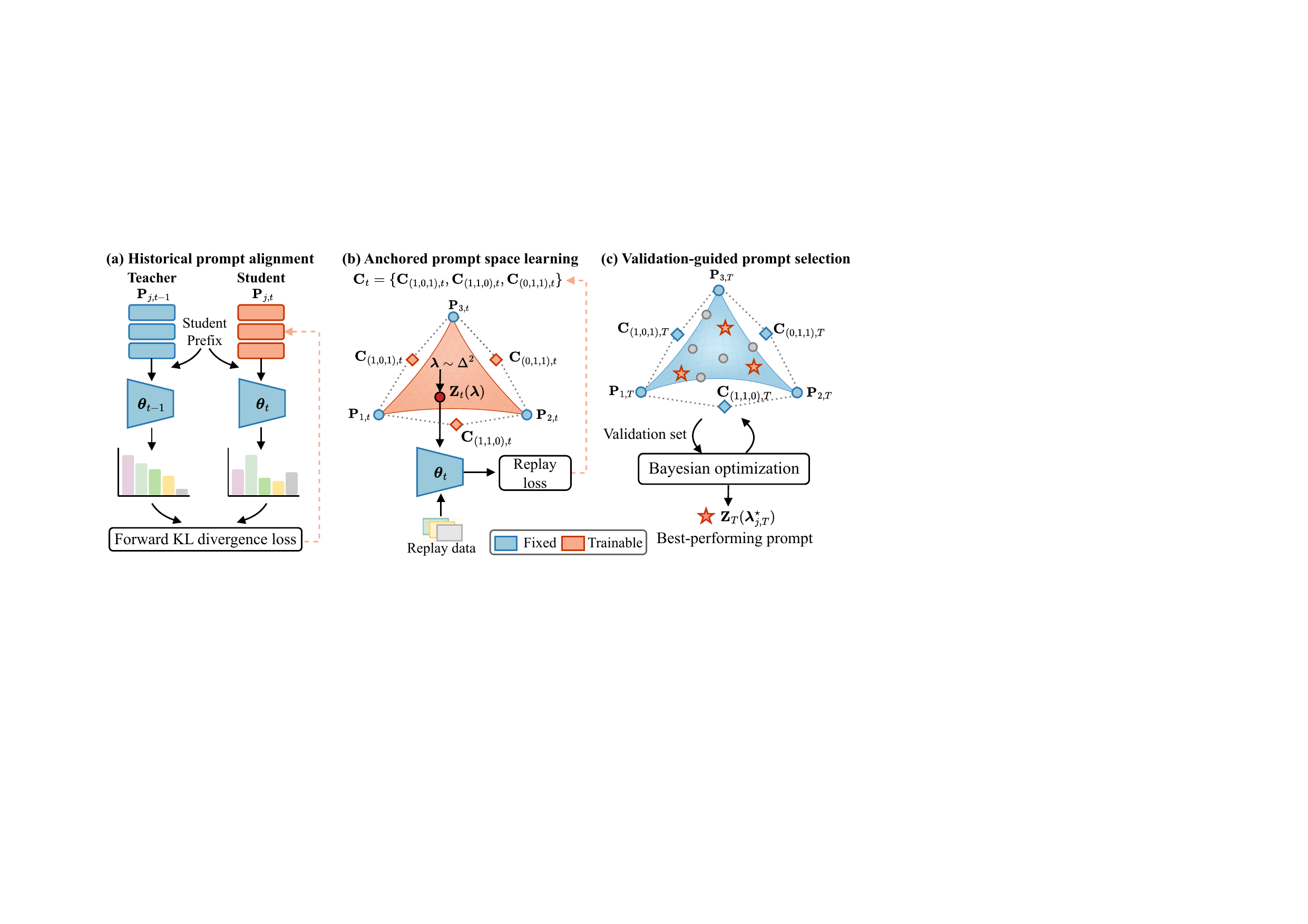}
\vspace{-5pt}
\caption{
\textbf{Overview of LAPS.}
\textbf{(a)} After adapting to the current task $\mathcal{T}_t$, LAPS aligns
each historical prompt $\mathbf{P}_{j,t-1}$ to the updated model, producing
$\mathbf{P}_{j,t}$ through on-policy self-distillation.
\textbf{(b)} The aligned prompts $\{\mathbf{P}_{j,t}\}_{j\le t}$ are fixed as
simplex anchors, while non-vertex B\'ezier control prompts
$\mathbf{C}_{\bm{\alpha},t}$ are trained with cross-task replay losses to shape
the space between anchors.
\textbf{(c)} With the final model fixed, Bayesian optimization searches
validation scores over simplex coordinates $\bm{\lambda}$ and returns the
selected soft prompt $\mathbf{Z}_T(\bm{\lambda}_{j,T}^{\star})$ for each
observed task.
}\label{fig:overview}
\vspace{-6pt}
\end{figure}

\subsection{Historical Prompt Alignment via Self-Distillation}

As model parameters evolve, a prompt learned at an earlier stage may no longer
induce the same behavior under the updated model. LAPS aligns
historical prompts after each model update so that they remain compatible with
the model, without constraining the model update in
Eq.~(\ref{eq:model_update}).

Direct supervised fine-tuning of historical prompts on limited previous task replay data is insufficient because the replay set sparsely covers
previous tasks, and ground-truth targets do not capture the full behavior of
the original model--prompt pair, such as output distributions or responses to
alternative continuations. LAPS therefore uses the previous model--prompt pair
as the teacher, providing a richer behavioral target for aligning each
historical prompt with the updated model.

Let $\mathcal{S}_t \subset \mathcal{D}_t$ be a subset sampled from the current
task. Prior work shows that the forward KL divergence from the original model
to the updated model, evaluated on new task data, strongly predicts forgetting
of previously learned capabilities~\citep{shenfeld2026rl}. Motivated by this
finding, we include $\mathcal{S}_t$ alongside the stage-$t$ replay set
$\mathcal{R}_t$ when aligning historical prompts. For each task $j=2,\ldots,t$, the teacher is
the previous pair $(\bm{\theta}_{t-1},\mathbf{P}_{j,t-1})$, and the student is
the pair $(\bm{\theta}_t,\mathbf{P}_{j,t})$. The teacher provides a behavioral
reference for how the historical prompt functioned before the model update.
Matching its predictive distribution allows the updated prompt to recover this
behavior while keeping $\bm{\theta}_t$ fixed. LAPS aligns
$\mathbf{P}_{j,t}$ by minimizing the forward KL divergence:
\begin{equation}
\min_{\mathbf{P}_{j,t}}
\mathbb{E}_{\substack{
\bm{x}\sim\mathcal{R}_t\cup\mathcal{S}_t\\
\bm{y}_{<\tau}\sim
\pi_{\bm{\theta}_{t},\mathbf{P}_{j,t}}(\cdot\mid\bm{x})}}
\left[
D_{\mathrm{KL}}
\Bigl(
\pi_{\bm{\theta}_{t-1},\mathbf{P}_{j,t-1}}
(\cdot\mid\bm{x},\bm{y}_{<\tau})
\|
\pi_{\bm{\theta}_{t},\mathbf{P}_{j,t}}
(\cdot\mid\bm{x},\bm{y}_{<\tau})
\Bigr)
\right],
\label{eq:historical_alignment}
\end{equation}
where $\pi_{\bm{\theta},\mathbf{P}}(\cdot\mid\bm{x},\bm{y}_{<\tau})$
denotes the next-token predictive distribution induced by parameters
$\bm{\theta}$ and prompt $\mathbf{P}$, and $\bm{y}_{<\tau}$ is sampled
on-policy from the student and used as the shared prefix for both teacher and
student distributions. Optimizing Eq.~(\ref{eq:historical_alignment}) realigns earlier prompts to
the updated model, so new task adaptation need not preserve every previous
behavior in the model parameters.

\subsection{Anchored Prompt Space Learning}

After alignment, LAPS has a set of task prompts
$\{\mathbf{P}_{1,t},\ldots,\mathbf{P}_{t,t}\}$ that remain usable under the
current model $\bm{\theta}_t$. LAPS treats these task-specific soft prompts as anchors for their corresponding
task knowledge. Using the anchors alone restricts inference to independently learned task solutions,
leaving potential cross-task transfer unexplored. A natural alternative is convex interpolation, but its
intermediate prompts are fixed linear combinations of the anchors and cannot be adapted through
cross-task supervision. LAPS therefore introduces learnable non-vertex B\'ezier control prompts to
shape the space between the fixed anchors using replay losses across observed tasks.

At stage $t$, LAPS defines a simplex over the observed tasks:
\begin{equation}
\Delta^{t-1}
=
\left\{
\bm{\lambda}\in\mathbb{R}_{\geq0}^{t}
\;\middle|\;
\sum_{j=1}^{t}\lambda_j=1
\right\}.
\end{equation}
Each component $\lambda_j$ controls the relative emphasis on task
$\mathcal{T}_j$. For any coordinate
$\bm{\lambda}\in\Delta^{t-1}$, LAPS constructs a prompt
$\mathbf{Z}_t(\bm{\lambda})$ with a degree-$k$ B\'ezier simplex:
\vspace{-4pt}
\begin{equation}
\mathbf{Z}_{t}(\bm{\lambda})
=
\sum\nolimits_{j=1}^{t}
\lambda_j^{k}\mathbf{P}_{j,t}
+
\sum\nolimits_{\bm{\alpha}}
\frac{k!}{\prod_{j=1}^{t}\alpha_j!}
\prod_{j=1}^{t}\lambda_j^{\alpha_j}
\mathbf{C}_{\bm{\alpha},t}.
\label{eq:anchored_bezier}
\vspace{-2pt}
\end{equation}
As shown in Fig.~\ref{fig:overview}(b), the task prompts
$\mathbf{P}_{j,t}$ form the simplex vertices, while the non-vertex control
prompts $\mathbf{C}_{\bm{\alpha},t}$ shape the space between them. In the
second sum, $\bm{\alpha}=(\alpha_1,\ldots,\alpha_t)$ ranges over the
non-vertex degree-$k$ multi-indices: each $\alpha_j\in\{0,1,\ldots,k\}$,
$\sum_{j=1}^{t}\alpha_j=k$, and
$\bm{\alpha}\notin\{k\bm{e}_1,\ldots,k\bm{e}_t\}$, where
$\bm{e}_j$ is the $j$-th unit vector. Each 
$\bm{\alpha}$ indexes one learnable control prompt
$\mathbf{C}_{\bm{\alpha},t}$, so LAPS adds
$\binom{t+k-1}{k}-t$ control prompts at stage $t$. For coordinate
$\bm{\lambda}$, the Bernstein weight
$\frac{k!}{\prod_{j=1}^{t}\alpha_j!}
\prod_{j=1}^{t}\lambda_j^{\alpha_j}$
determines how strongly $\mathbf{C}_{\bm{\alpha},t}$ contributes to
$\mathbf{Z}_t(\bm{\lambda})$.
When $k=1$, no non-vertex control prompts exist, and
Eq.~(\ref{eq:anchored_bezier}) reduces to convex interpolation among task
anchors. When $k>1$, the control prompts add learnable degrees of freedom
between the task anchors. Notably, these additional control prompts account for less
than $0.5\%$ of the backbone parameters
(see Appendix~\ref{chara_para_count}).

To learn these degrees of freedom, LAPS keeps the model parameters $\bm{\theta}_t$ and the task
anchors fixed and optimizes only the non-vertex control prompts, collected as
$\mathbf{C}_t$. These prompts are learned from supervision across all observed
tasks. For each observed task $j\leq t$, the loss induced by the coordinate
$\bm{\lambda}$ and control prompts $\mathbf{C}_t$ is evaluated on examples from the stage-$t$ replay set $\mathcal{R}_t$:
\begin{equation}
\ell_{j,t}(\bm{\lambda},\mathbf{C}_t)
=
\mathbb{E}_{(\bm{x},y)\sim\mathcal{R}_t}
\left[
\ell_j
\left(
f(\bm{x};
\bm{\theta}_t,
\mathbf{Z}_t(\bm{\lambda})),
y
\right)
\right].
\end{equation}
Each $\bm{\lambda}$ specifies a desired trade-off among observed tasks. We
compute the control prompt loss using a smooth Tchebycheff scalarization, which aggregates the $\bm{\lambda}$-weighted losses and smoothly emphasizes
the larger weighted loss:
\begin{equation}
\min_{\mathbf{C}_t}
\mathbb{E}_{\bm{\lambda}\sim\Delta^{t-1}}
\left[
\mu
\log
\sum\nolimits_{j=1}^{t}
\exp
\left(
\frac{\lambda_j
\ell_{j,t}(\bm{\lambda},\mathbf{C}_t)}
{\mu}
\right)
\right],
\label{eq:control_scalarization}
\end{equation}
where $\mu>0$ controls the smoothness of the scalarization. Optimizing
Eq.~(\ref{eq:control_scalarization}) learns the non-vertex control prompts
across sampled trade-offs $\bm{\lambda}$, allowing cross-task interactions to
shape the anchored prompt space for subsequent task-specific search.


\subsection{Validation-Guided Prompt Selection}

After anchored prompt space learning, LAPS fixes the final model
$\bm{\theta}_T$, task anchors, and B\'ezier control prompts, and uses
validation data to select a soft prompt from the learned anchored space for
each observed task. For task $\mathcal{T}_j$, the prompt
$\mathbf{Z}_T(\bm{\lambda})$, parameterized by the simplex coordinate
$\bm{\lambda}\in\Delta^{T-1}$, is directly optimized with the fixed model on the
task's validation set:
\begin{equation}
\bm{\lambda}_{j,T}^{\star}
=
\argmax_{\bm{\lambda}\in\Delta^{T-1}}
\mathrm{Val}_{j,T}(\bm{\lambda}),
\qquad j=1,\ldots,T,
\label{eq:task_lambda_selection}
\end{equation}
where $\mathrm{Val}_{j,T}(\bm{\lambda})$ denotes the resulting task-specific
validation score, which may be discrete or otherwise unsuitable for
gradient-based optimization (\eg, accuracy, F1 score, or exact match). LAPS therefore treats
$\mathrm{Val}_{j,T}$ as a black-box objective and approximates
Eq.~(\ref{eq:task_lambda_selection}) with Bayesian optimization~\citep{snoek2012practical,snoek2014input}. For each task, a Gaussian process surrogate is fitted to the observed coordinate–score pairs, and
Expected Improvement~\citep{jones1998efficient} proposes coordinates that
balance high predicted validation scores with uncertainty. Each proposed
coordinate $\bm{\lambda}$ is used to construct $\mathbf{Z}_T(\bm{\lambda})$, which is
evaluated on the validation set to form a new coordinate--score pair for
updating the surrogate. After the search, the leading candidate coordinates are
reevaluated on the full validation set over repeated runs, with the task's own
anchor added if it is not already among the finalists. The coordinate with the
highest mean validation score is mapped through
Eq.~(\ref{eq:anchored_bezier}) to obtain the final soft prompt for that task.
Appendix~\ref{app:bo_details} provides further details on the search and evaluation.

\section{Experiment}
In this section, we evaluate LAPS on continual instruction tuning. We first give the
TRACE setup and comparison methods, then analyze aggregate performance,
task-wise adaptation dynamics, and the learned prompt space. Finally, we present
ablations and diagnostic studies of LAPS components.

\subsection{Experimental Setup}

\noindent\textbf{Benchmark.}
We conduct experiments on TRACE~\citep{wang2023trace}, an eight-task
continual instruction tuning benchmark consisting of C-STANCE, FOMC,
MeetingBank, Py150, ScienceQA, NumGLUE-cm, NumGLUE-ds, and 20Minuten.
Following prior work~\citep{chen2026policy}, we subsample 5,000 training examples from
each task and use the official evaluation set.

\noindent\textbf{Training.}
We evaluate all methods on Qwen3-1.7B, Qwen3-4B, and Qwen3-8B \citep{yang2025qwen3}.
Methods that update model parameters use full-parameter fine-tuning with AdamW,
a learning rate of $1\times10^{-5}$, a global batch size of 128, a maximum
sequence length of 2,048, and \texttt{bf16} precision. Unless otherwise stated, prompt-based methods use $32$ prompt tokens. Following the task order in TRACE, we train the eight
tasks for $[5,3,7,5,3,5,5,7]$ epochs, respectively. For replay-based methods,
the replay ratio is set to $1\%$. For LAPS, we use a cubic B\'ezier simplex
with degree $k=3$, resulting in 112 control prompts for eight tasks. The
scalarization smoothing parameter is $\mu=0.1$. We construct
the alignment dataset by combining replay data from previous tasks and data
from the current task at a $1\!:\!1$ ratio. Bayesian optimization is performed
on the official validation set with a budget of 100 evaluations per task,
except for MeetingBank, where we use 50 evaluations.

\noindent\textbf{Baselines.}
We compare against baselines that stress different continual-adaptation
mechanisms. SFT is a \textit{parameter-only adaptation} baseline; vanilla
replay~\citep{rolnick2019experience} and on-policy replay
(OPR-SC and OPR-RU, \citealp{chen2026policy}) add stored or selected previous task examples; LwF
~\citep{li2017learning} uses \textit{distillation-based adaptation};
ProgPrompt~\citep{razdaibiedina2023progressive} performs
\textit{prompt-only adaptation}; and ProMoT~\citep{wang2024two} implements
\textit{joint prompt--parameter adaptation}. Together, these baselines enable a systematic comparison of retention behavior
across parameter-only adaptation, replay, distillation, frozen-backbone prompt
tuning, and joint prompt--parameter adaptation under continual learning.

\noindent\textbf{Metrics.}
We report average performance (\textbf{Average}) across all tasks and backward
transfer (\textbf{BWT}, \citealp{lopez2017gradient}) as a measure of forgetting.
Let $R_{j,t}$ denote the performance on task $\mathcal{T}_j$ after
learning stage $t$. We define
$\mathrm{BWT}=\frac{1}{T-1}\sum_{j=1}^{T-1}(R_{j,T}-R_{j,j})$.
Higher Average indicates better final performance averaged across tasks, while
higher BWT indicates stronger retention of earlier tasks. All results are
averaged over eight inference runs with a sampling temperature of $0.1$.

Method characteristics and trainable parameter counts are provided in
Appendix~\ref{chara_para_count}; dataset statistics and evaluation metrics in
Appendix~\ref{app:dataset_details}; and additional experimental analyses in
Appendix~\ref{additionalexp}.

\begin{table*}[t]
    \centering
\caption{
\textbf{Performance across Qwen3 model scales.}
The best and second best results at each model scale
are shown in \textbf{bold} and \underline{underlined}, respectively.
}
\vspace{-10pt}\label{tab:main_results_all_scales}

    \scriptsize
    \setlength{\tabcolsep}{3pt}
    \renewcommand{\arraystretch}{1.0}

    \resizebox{\textwidth}{!}{%
    \begin{tabular}{lcccccccc|cc}
        \toprule
        \textbf{Method}
        & \textbf{C-STANCE}
        & \textbf{FOMC}
        & \textbf{MeetingBank}
        & \textbf{Py150}
        & \textbf{ScienceQA}
        & \textbf{NG-cm}
        & \textbf{NG-ds}
        & \textbf{20Minuten}
        & \textbf{Average} $\uparrow$
        & \textbf{BWT} $\uparrow$ \\
        \midrule

        \rowcolor{gray!15}
        \multicolumn{11}{l}{\textbf{Qwen3-1.7B}} \\
        Initial
        & $36.3$ & $32.7$ & $18.4$ & $2.5$ & $63.4$
        & $70.2$ & $30.2$ & $35.8$ & $36.2$ & $\text{N/A}$ \\
        SFT
        & $\underline{53.1}$ & $58.5$ & $20.1$ & $52.6$ & $76.7$
        & $46.3$ & $53.8$ & $38.8$ & $50.0$ & $-9.3$ \\

        Replay
        & $52.8$
        & $62.4$
        & $46.6$
        & $56.6$
        & $73.4$
        & $51.1$
        & $53.2$
        & $38.9$
        & $54.4$
        & $-4.1$ \\

        OPR-RU
        & $50.4$
        & $58.2$
        & $47.0$
        & $54.7$
        & $77.1$
        & $\underline{55.7}$
        & $51.7$
        & $38.8$
        & $54.2$
        & $-4.9$ \\

        OPR-SC
        & $50.7$
        & $51.9$
        & $38.6$
        & $55.0$
        & $77.7$
        & $47.5$
        & $52.6$
        & $38.7$
        & $51.6$
        & $-7.6$ \\
                LwF
        & $45.9$
        & $28.4$
        & $38.2$
        & $43.4$
        & $41.6$
        & $43.1$
        & $53.5$
        & $38.0$
        & $41.5$
        & $-13.8$ \\

        ProgPrompt
        & $\mathbf{53.5}$
        & $60.6$
        & $47.7$
        & $49.8$
        & $73.8$
        & $45.4$
        & $47.6$
        & $38.4$
        & $52.1$
        & $\mathbf{0.0}$ \\

        ProMoT
        & $50.9$
        & $\underline{62.7}$
        & $\underline{50.9}$
        & $\underline{59.2}$
        & $\mathbf{84.2}$
        & $47.7$
        & $\mathbf{67.7}$
        & $\mathbf{40.5}$
        & $\underline{58.0}$
        & $-4.4$ \\
        \noalign{\vskip 1pt}
        \hline
        \noalign{\vskip 1pt}
        \textbf{LAPS (Ours)}
        & $52.0$
        & $\mathbf{69.4}$
        & $\mathbf{58.7}$
        & $\mathbf{60.0}$
        & $\underline{83.4}$
        & $\mathbf{56.0}$
        & $\underline{64.8}$
        & $\underline{40.0}$
        & $\mathbf{60.5}$
        & $\underline{-1.2}$ \\

        \midrule
        \rowcolor{gray!15}
        \multicolumn{11}{l}{\textbf{Qwen3-4B}} \\
        Initial
        & $42.7$ & $5.0$ & $20.6$ & $2.4$ & $64.6$
        & $76.4$ & $38.2$ & $35.9$ & $35.7$ & $\text{N/A}$ \\
        SFT
        & $52.5$ & $66.2$ & $54.3$ & $37.7$ & $89.2$
        & $\mathbf{66.5}$ & $\underline{77.3}$ & $40.8$ & $60.6$ & $-6.7$ \\

        Replay
        & $55.3$
        & $71.4$
        & $59.0$
        & $\underline{62.3}$
        & $91.9$
        & $64.0$
        & $75.0$
        & $40.5$
        & $\underline{64.9}$
        & $-2.7$ \\

        OPR-RU
        & $53.0$
        & $\mathbf{73.3}$
        & $57.7$
        & $62.2$
        & $91.1$
        & $63.9$
        & $76.9$
        & $\mathbf{41.1}$
        & $\underline{64.9}$
        & $-1.2$ \\

        OPR-SC
        & $52.8$
        & $61.9$
        & $58.0$
        & $60.4$
        & $82.7$
        & $62.8$
        & $\mathbf{77.8}$
        & $40.4$
        & $62.1$
        & $-5.5$ \\
                LwF
        & $56.0$
        & $71.1$
        & $47.7$
        & $38.8$
        & $83.3$
        & $65.1$
        & $69.4$
        & $39.9$
        & $58.9$
        & $\mathbf{+0.2}$ \\

        ProgPrompt
        & $\underline{57.8}$
        & $62.2$
        & $51.4$
        & $57.9$
        & $79.7$
        & $55.4$
        & $52.4$
        & $39.0$
        & $57.0$
        & $\underline{0.0}$ \\

        ProMoT
        & $51.9$
        & $59.3$
        & $\mathbf{62.9}$
        & $\mathbf{63.0}$
        & $\mathbf{93.4}$
        & $63.1$
        & $72.7$
        & $\underline{41.0}$
        & $63.4$
        & $-2.0$ \\
        \noalign{\vskip 1pt}
        \hline
        \noalign{\vskip 1pt}
        \textbf{LAPS (Ours)}
        & $\mathbf{59.5}$
        & $\underline{71.9}$
        & $\underline{60.3}$
        & ${61.6}$
        & $\underline{92.4}$
        & $\underline{66.4}$
        & $76.9$
        & $40.6$
        & $\mathbf{66.2}$
        & $-0.3$ \\

        \midrule
        \rowcolor{gray!15}
        \multicolumn{11}{l}{\textbf{Qwen3-8B}} \\
        Initial
        & $52.6$ & $21.0$ & $20.8$ & $2.4$ & $68.0$
        & $77.8$ & $38.4$ & $36.4$ & $39.6$ & $\text{N/A}$ \\
        SFT
        & $47.3$ & $63.5$ & $50.0$ & $50.0$ & $73.4$
        & $67.9$ & $77.7$ & $41.5$ & $58.9$ & $-10.1$ \\

        Replay
        & $52.3$
        & $71.9$
        & $55.2$
        & $62.1$
        & $92.4$
        & $67.1$
        & $\mathbf{79.5}$
        & $41.4$
        & $65.2$
        & $-2.9$ \\

        OPR-RU
        & $52.6$
        & $\mathbf{74.5}$
        & $60.8$
        & $\underline{62.6}$
        & $92.0$
        & $69.1$
        & $\underline{78.5}$
        & $41.1$
        & $66.4$
        & $-2.7$ \\

        OPR-SC
        & $51.8$
        & $72.6$
        & $57.0$
        & $62.1$
        & $\underline{93.0}$
        & $\mathbf{69.8}$
        & $76.7$
        & $41.5$
        & $65.6$
        & $-2.5$ \\
                LwF
        & $49.5$
        & $72.0$
        & $46.3$
        & $30.8$
        & $84.7$
        & $52.0$
        & $66.1$
        & $39.6$
        & $55.1$
        & $\mathbf{+4.3}$ \\

        ProgPrompt
        & $\mathbf{58.6}$
        & ${64.9}$
        & $54.9$
        & $60.7$
        & $84.9$
        & $47.5$
        & $56.0$
        & $39.3$
        & $58.4$
        & $0.0$ \\

        ProMoT
        & $55.9$
        & $69.8$
        & $\underline{64.5}$
        & $\mathbf{64.1}$
        & $\mathbf{94.6}$
        & $\underline{69.3}$
        & $\underline{78.5}$
        & $\underline{41.6}$
        & $\underline{67.3}$
        & $-0.2$  \\
        \noalign{\vskip 1pt}
        \hline
        \noalign{\vskip 1pt}
        \textbf{LAPS (Ours)}
        & $\underline{57.6}$
        & $\underline{73.4}$
        & $\mathbf{66.4}$
        & $\mathbf{64.1}$
        & $92.4$
        & $66.2$
        & $78.3$
        & $\mathbf{41.8}$
        & $\mathbf{67.5}$
        & $\underline{+1.2}$ \\

        \bottomrule
    \end{tabular}%
    }
    \vspace{-5pt}
\end{table*}

\subsection{Main Results}

Table~\ref{tab:main_results_all_scales} shows the adaptation--retention
pattern across the three Qwen3 model scales. Parameter updating methods such
as SFT, Replay, OPR-SC, OPR-RU, and ProMoT often improve new-task performance,
but their BWT values show that earlier task behavior can still be overwritten.
ProgPrompt has the opposite profile: freezing the backbone helps preserve performance on previous tasks, yet its Average remains lower than the best parameter-updating
methods on several scales. LAPS attains the highest Average on all three model scales, with 60.5\%,
66.2\%, and 67.5\% on Qwen3-1.7B, Qwen3-4B, and Qwen3-8B, respectively, while
keeping BWT close to or above zero. The most directly comparable baseline is ProMoT, since it also combines
soft prompts with model parameter updates. Relative to ProMoT, LAPS
improves Average by $0.2\%\text{--}2.8\%$ and BWT by
$1.4\%\text{--}3.2\%$. This consistent
pattern across model scales suggests a more favorable adaptation--retention
trade-off that is not confined to a particular model size.

\noindent\textbf{Continual Adaptation Dynamics.} Beyond the aggregate table, the task-wise trajectories in
Fig.~\ref{fig:retention1_7} reveal when forgetting occurs during
the continual sequence. On Qwen3-1.7B, SFT illustrates the failure mode of
sequential fine-tuning: later updates are followed by
degradation on several earlier tasks. ProgPrompt keeps the backbone and
previous prompts fixed, which helps retention, but this constraint leaves less
room for new-task adaptation.
ProMoT adapts both prompts and model parameters and improves performance on several new tasks,
yet earlier tasks such as Py150 and NumGLUE-cm still decline as the
model parameters continue to change across the task sequence.

In contrast, LAPS exhibits a different trajectory. After sequential adaptation ends at $\mathcal{T}_8$, indicated by the vertical
gray dashed line, the model is fixed and Bayesian optimization selects a prompt
for each task from the learned anchored prompt space. The rightmost point in
each subplot shows performance with the selected prompt, compared with the
preceding pre-search point. Improvements on most tasks suggest that useful
task-specific regions exist beyond the original task anchors.
\begin{figure}[!t]
    \centering
\vspace{-5pt}\includegraphics[width=\linewidth]{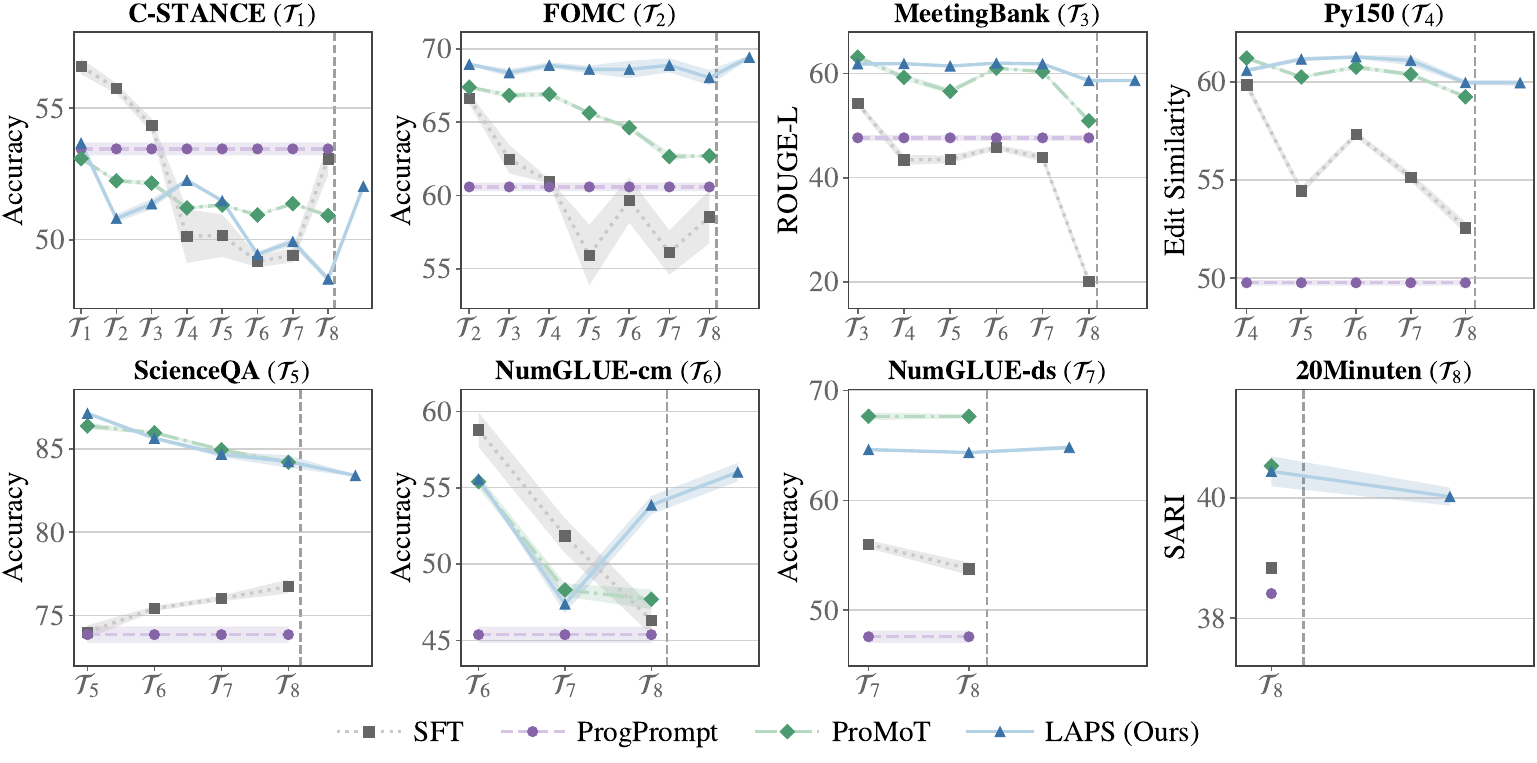}
    \vspace{-18pt}
    \caption{
\textbf{Task-wise continual adaptation trajectories with Qwen3-1.7B.}
Each panel tracks one task after it is learned, and later points show its
performance after subsequent adaptation stages. Drops indicate forgetting as
the model learns new tasks. The gray dashed line marks the transition from sequential adaptation to the validation-guided prompt selection introduced in LAPS.
}
    \label{fig:retention1_7}
    \vspace{-10pt}
\end{figure}
\begin{figure*}[t]
    \centering
        \begin{minipage}[t]{0.535\textwidth}
        \centering
        \includegraphics[width=\linewidth]{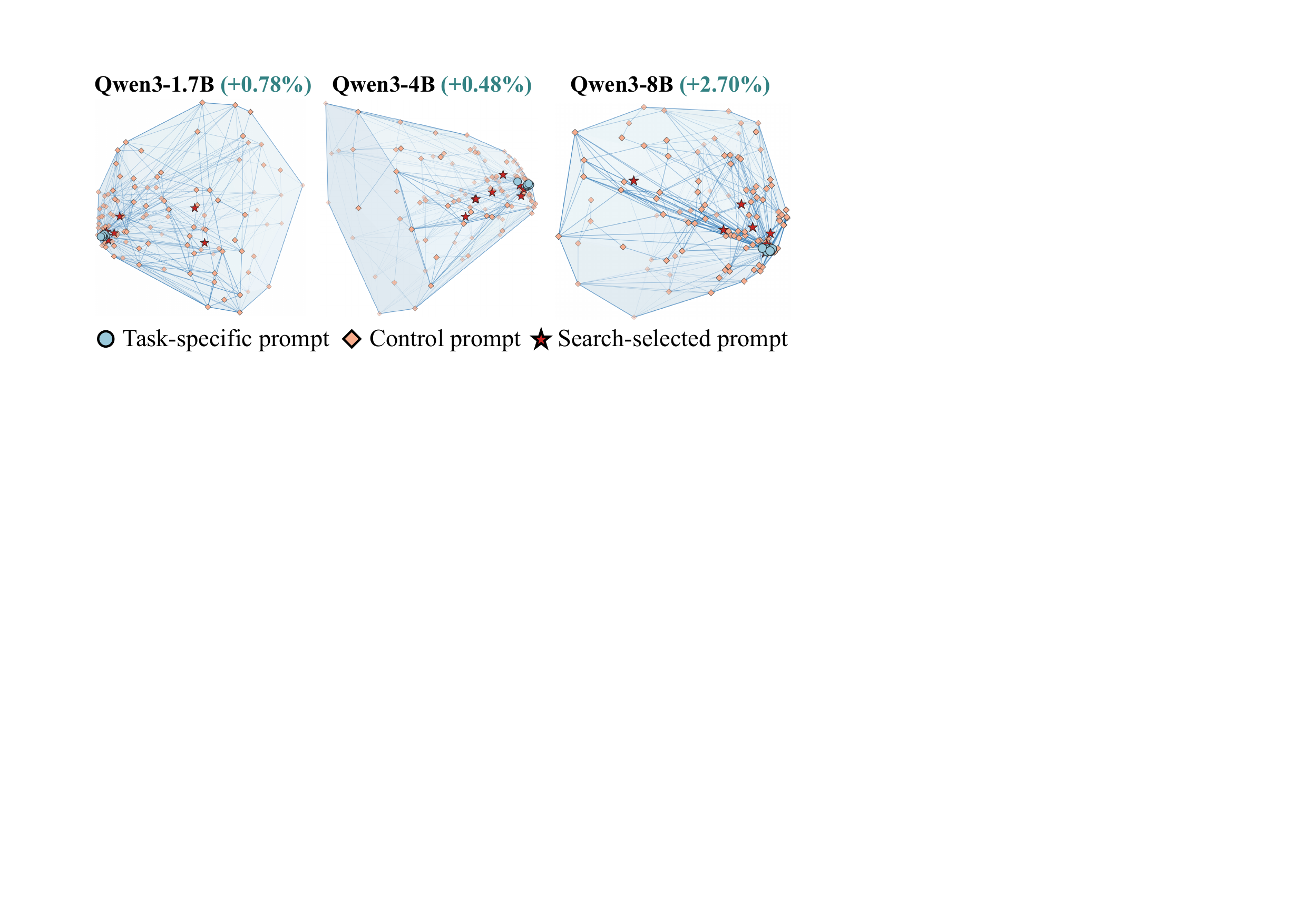}
        \vspace{-18pt}
        \caption{PCA projection of learned anchored prompt spaces across Qwen3 model scales.}
        \label{fig:manifold}
    \end{minipage}
    \hfill
    \begin{minipage}[t]{0.44\textwidth}
        \centering
        \includegraphics[width=\linewidth]{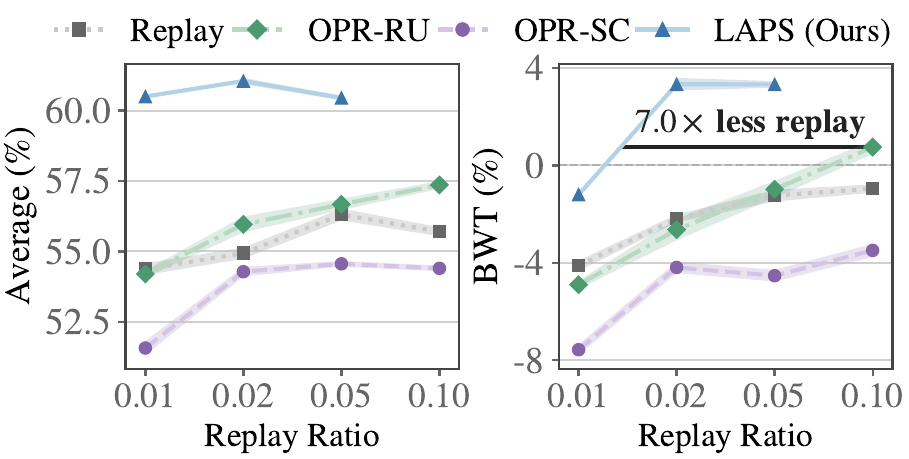}
        \vspace{-17.8pt}
\caption{Average performance and BWT across replay ratios on Qwen3-1.7B.}
        \label{fig:replay_ratio}
    \end{minipage}
    \vspace{-7pt}
\end{figure*}

\noindent\textbf{Visualization of the Learned Prompt Space.}
Fig.~\ref{fig:manifold} uses Principal Component Analysis (PCA) to examine where validation-selected prompts
lie relative to the original per-task prompt anchors. The selected prompts
are generally not located at the anchors and improve average performance by
$0.78\%$, $0.48\%$, and $2.70\%$ over using the anchors on Qwen3-1.7B,
Qwen3-4B, and Qwen3-8B, respectively.
This pattern suggests that the B\'ezier control prompts define a searchable
prompt space in which useful task-specific prompts need not coincide with the
original anchors.

\subsection{Ablations and Diagnostic Analyses}

\noindent\textbf{Effect of LAPS Components.}
Table~\ref{tab:ablation_alignment_space_selection} examines the contributions
of historical prompt alignment, anchored prompt space learning, and
validation-guided prompt selection. The full model outperforms each removal
variant, indicating that the gains do not arise from any single component
alone. Alignment has the clearest effect on retention: removing it causes the
largest degradation in BWT while also lowering Average. This pattern is
consistent with the role of alignment, which keeps historical prompts
compatible with the updated model for each new task.

Prompt space learning and selection provide additional benefits once alignment is
present. Removing either component weakens the final trade-off, but the two
ablations isolate different functions: prompt space learning tests whether the
intermediate prompt geometry is learned, whereas prompt selection tests whether each
task can choose a suitable prompt after training. The results suggest
complementary roles: alignment is most closely tied to retaining historical
behavior, while space learning and selection expand the learned prompt space
and use it for task-specific prompt selection.

\noindent\textbf{Transferability of LAPS Components.}
Table~\ref{tab:prompt_methods_with_laps} compares replay augmentation with
integrating the prompt space learning and selection components of LAPS into
ProgPrompt and ProMoT. Adding replay alone provides limited or inconsistent benefits and can
even reduce performance. Adding prompt space learning and selection improves both
Average and BWT across the reported model scales. 
These comparisons suggest that organizing and searching the prompt space
provides additional adaptation and retention benefits beyond replay alone.

\begin{table*}[t!]
    \centering

    \begin{minipage}[t]{0.51\textwidth}
        \centering
        \caption{
\textbf{Ablation of the three LAPS components:} historical prompt alignment,
prompt space learning, and  prompt selection.
}
        \label{tab:ablation_alignment_space_selection}
        \vspace{-10pt}

        \scriptsize
        \setlength{\tabcolsep}{4pt}
        \renewcommand{\arraystretch}{1.30}

        \resizebox{\linewidth}{!}{%
        \begin{tabular}{ccccccc}
            \toprule
            \multicolumn{3}{c}{\textbf{Components}}
            & \multicolumn{2}{c}{\textbf{Qwen3-1.7B}}
            & \multicolumn{2}{c}{\textbf{Qwen3-4B}} \\
            \cmidrule(lr){1-3}
            \cmidrule(lr){4-5}
            \cmidrule(lr){6-7}

            \textbf{Alignment} & \textbf{Space Learning} & \textbf{Selection}
            & \textbf{Average} & \textbf{BWT}
            & \textbf{Average} & \textbf{BWT} \\
            \midrule

            \cmark & \cmark & \cmark
            & $\mathbf{60.5}$ & $\mathbf{-1.2}$
            & $\mathbf{66.2}$ & $\mathbf{-0.3}$ \\

            \xmark & \cmark & \cmark
            & $58.8$ & $-4.0$
            & $65.6$ & $-0.4$ \\

            \cmark & \xmark & \cmark
            & $59.6$ & $-2.3$
            & $65.5$ & $-0.5$ \\

            \cmark & \cmark & \xmark
            & $59.8$ & $-2.1$
            & $65.8$ & $-0.7$ \\

            \xmark & \xmark & \xmark
            & $57.3$ & $-5.8$
            & $64.3$ & $-1.9$ \\

            \bottomrule
        \end{tabular}%
        }
    \end{minipage}
    \hfill
 \begin{minipage}[t]{0.48\textwidth}
    \centering
    \captionof{table}{
\textbf{Prompt-based baselines with replay and LAPS components.}
$\dagger$
denotes the variant without prompt concatenation.
}
    \label{tab:prompt_methods_with_laps}
    \vspace{-10pt}

    \scriptsize
    \setlength{\tabcolsep}{5pt}
    \renewcommand{\arraystretch}{1}

    \resizebox{\linewidth}{!}{%
    \begin{tabular}{lcccc}
        \toprule
        \multirow{2}{*}{\textbf{Method}}
        & \multicolumn{2}{c}{\textbf{Qwen3-1.7B}}
        & \multicolumn{2}{c}{\textbf{Qwen3-4B}} \\
        \cmidrule(lr){2-3}
        \cmidrule(lr){4-5}

        & \textbf{Average} & \textbf{BWT}
        & \textbf{Average} & \textbf{BWT} \\
        \midrule

        ProgPrompt$^{\dagger}$
        & $52.3$ & $0.0$
        & $58.9$ & $0.0$ \\

        \quad + Replay
        & $50.8$ & $0.0$
        & $55.1$ & $0.0$ \\

        \quad + Space Learning \& Selection
        & $\mathbf{53.2}$ & $\mathbf{+1.0}$
        & $\mathbf{60.5}$ & $\mathbf{+1.8}$ \\

        \midrule

        ProMoT
        & $58.0$ & $-4.4$
        & $63.4$ & $-2.0$ \\

        \quad + Replay
        & $54.2$ & $-6.9$
        & $63.9$ & $-2.8$ \\

        \quad + Space Learning \& Selection
        & $\mathbf{58.8}$ & $\mathbf{-4.0}$
        & $\mathbf{65.6}$ & $\mathbf{-0.4}$ \\

        \bottomrule
    \end{tabular}%
    }
\end{minipage}
\vspace{-8pt}
\end{table*}

\begin{table*}[t]
    \centering
    \caption{Effect of the soft prompt length.
    Results are averaged over eight evaluations.}
    \label{tab:soft_prompt_number}
    \vspace{-10pt}

    \small
    \setlength{\tabcolsep}{6pt}
    \renewcommand{\arraystretch}{1.12}

    \resizebox{\textwidth}{!}{%
    \begin{tabular}{lcccccccccccc}
        \toprule
        \multirow{3}{*}{\textbf{Method}}
        & \multicolumn{6}{c}{\textbf{Qwen3-1.7B}}
        & \multicolumn{6}{c}{\textbf{Qwen3-4B}} \\
        \cmidrule(lr){2-7}
        \cmidrule(lr){8-13}

        & \multicolumn{2}{c}{\textbf{Length = 32}}
        & \multicolumn{2}{c}{\textbf{Length = 64}}
        & \multicolumn{2}{c}{\textbf{Length = 128}}
        & \multicolumn{2}{c}{\textbf{Length = 32}}
        & \multicolumn{2}{c}{\textbf{Length = 64}}
        & \multicolumn{2}{c}{\textbf{Length = 128}} \\
        \cmidrule(lr){2-3}
        \cmidrule(lr){4-5}
        \cmidrule(lr){6-7}
        \cmidrule(lr){8-9}
        \cmidrule(lr){10-11}
        \cmidrule(lr){12-13}

        & \textbf{Average} & \textbf{BWT}
        & \textbf{Average} & \textbf{BWT}
        & \textbf{Average} & \textbf{BWT}
        & \textbf{Average} & \textbf{BWT}
        & \textbf{Average} & \textbf{BWT}
        & \textbf{Average} & \textbf{BWT} \\
        \midrule

        ProgPrompt
        & $52.1$ & $\mathbf{0.0}$
        & $49.2$ & $\mathbf{0.0}$
        & $51.2$ & $0.0$
        & $57.0$ & $\mathbf{0.0}$
        & $57.4$ & $\mathbf{0.0}$
        & $57.5$ & $\mathbf{0.0}$ \\

        ProMoT
        & $58.0$ & $-4.4$
        & $55.8$ & $-4.6$
        & $48.6$ & $-6.9$
        & $63.4$ & $-2.0$
        & ${65.5}$ & $-0.2$
        & ${62.6}$ & $-3.2$ \\
\midrule
        \textbf{LAPS (Ours)}
        & $\mathbf{60.5}$ & $-1.2$
        & $\mathbf{59.5}$ & $-1.6$
        & $\mathbf{58.5}$ & $\mathbf{0.7}$
        & $\mathbf{66.2}$ & $-0.3$
        & $\mathbf{65.6}$ & $-0.4$
        & $\mathbf{64.7}$ & ${-1.3}$ \\

        \bottomrule
    \end{tabular}%
    }
    \vspace{-2pt}
\end{table*}
\begin{table*}[t!]
    \centering

    \begin{minipage}[t]{0.51\textwidth}
    \centering
    \captionof{table}{Effect of the B\'ezier degree $k$.}
    \label{tab:bezier_degree}
    \vspace{-8pt}

    \small
    \setlength{\tabcolsep}{4.5pt}
    \renewcommand{\arraystretch}{1.22}

    \resizebox{\linewidth}{!}{%
    \begin{tabular}{cccccc}
        \toprule
        \multirow{2}{*}{$k$}
        & \multirow{2}{*}{\textbf{\# Control Prompts}}
        & \multicolumn{2}{c}{\textbf{Qwen3-1.7B}}
        & \multicolumn{2}{c}{\textbf{Qwen3-4B}} \\
        \cmidrule(lr){3-4}
        \cmidrule(lr){5-6}

        & & \textbf{Average} & \textbf{BWT}
        & \textbf{Average} & \textbf{BWT} \\
        \midrule

        $1$ & $0$
        & $59.6$ & $-2.3$
        & $65.5$ & $-0.5$ \\

        $2$ & $28$
        & $60.4$ & $-2.1$
        & $65.7$ & $-0.4$ \\

        $3$ & $112$
        & $\mathbf{60.5}$ & $\mathbf{-1.2}$
        & $\mathbf{66.2}$ & $\mathbf{-0.3}$ \\

        $4$ & $322$
        & $59.3$ & $-2.4$
        & $65.2$  & $-0.9$ \\

        \bottomrule
    \end{tabular}%
    }
\end{minipage}
    \hfill
    \begin{minipage}[t]{0.48\textwidth}
        \centering
        \captionof{table}{LAPS with on-policy replay.}
        \label{tab:LAPS_with_onpolicy_methods}
        \vspace{-8pt}

        \small
        \setlength{\tabcolsep}{5pt}
        \renewcommand{\arraystretch}{1.12}

        \resizebox{\linewidth}{!}{%
        \begin{tabular}{lcccc}
            \toprule
            \multirow{2}{*}{\textbf{Method}}
            & \multicolumn{2}{c}{\textbf{Qwen3-1.7B}}
            & \multicolumn{2}{c}{\textbf{Qwen3-4B}} \\
            \cmidrule(lr){2-3}
            \cmidrule(lr){4-5}

            & \textbf{Average} & \textbf{BWT}
            & \textbf{Average} & \textbf{BWT} \\
            \midrule

            OPR-SC
            & $51.6$ & $-7.6$
            & $62.1$ & $-5.5$ \\

            OPR-SC + LAPS
            & $\mathbf{60.0}$ & $\mathbf{-1.1}$
            & $\mathbf{64.2}$ & $\mathbf{-1.5}$ \\

            \midrule

            OPR-RU
            & $54.2$ & $-4.9$
            & $\underline{64.9}$ & $-1.2$ \\

            OPR-RU + LAPS
            & $\mathbf{60.0}$ & $\mathbf{-2.0}$
            & $\mathbf{65.5}$ & $\mathbf{-0.4}$ \\

            \bottomrule
        \end{tabular}%
        }
    \end{minipage}
    \vspace{-4pt}
\end{table*}

\noindent\textbf{Effect of the Replay Ratio.}
Fig.~\ref{fig:replay_ratio} compares LAPS with replay-based methods under
different replay ratios, testing whether retention is mainly explained by
the number of previous-task examples. Increasing the replay ratio does not
lead to monotonic improvements, and LAPS remains competitive with a much
smaller replay budget. These results suggest that retention depends not only
on replay volume but also on how historical samples are used for alignment
and prompt space learning.

\noindent\textbf{Effect of the Soft Prompt Length.}
Table~\ref{tab:soft_prompt_number} varies the number of soft prompt tokens to
test whether the observed gains depend on a particular prompt length.
Increasing the prompt length does not consistently improve performance, and
the baselines show greater sensitivity to this choice. Across all tested
prompt lengths and model scales, LAPS maintains the highest Average,
suggesting that its aggregate advantage does not depend on a narrowly tuned
prompt length.

\noindent\textbf{Effect of the B\'ezier Degree $k$.}
Table~\ref{tab:bezier_degree} studies the capacity of the learned prompt space
by varying the B\'ezier degree. At $k=1$, prompt search is limited to convex
combinations of the task anchors. Increasing $k$ introduces learnable control
prompts and consistently improves both Average and BWT up to $k=3$ across model
scales. Further increasing $k$ to $4$ increases the number of
control prompts but brings no additional gain, suggesting that a moderate degree
is sufficient in this setting.

\noindent\textbf{Compatibility of LAPS with On-Policy Replay.}
LAPS can be combined with on-policy replay methods. Table~\ref{tab:LAPS_with_onpolicy_methods}
shows that adding LAPS improves both Average and BWT for OPR-SC and OPR-RU
across the reported model scales. These results indicate compatibility with on-policy replay: replay selects which historical examples are revisited, while
LAPS provides an alignment and prompt selection mechanism for using the selected examples.

\section{Conclusion and Discussion}
We presented LAPS for continual LLM adaptation by jointly updating prompts and model parameters. LAPS aligns historical prompts with the evolving model
through self-distillation, learns an anchored prompt space that captures
cross-task relationships, and selects task-specific soft prompts through
validation-guided search. Experiments on TRACE across three Qwen3 scales show
that LAPS achieves the highest average performance among compared methods while
reducing forgetting. These results suggest that organizing task-specific soft prompts into a structured,
searchable space can facilitate cross-task knowledge transfer during continual
adaptation.

The current formulation assumes known task identities and introduces additional
control prompts as tasks accumulate. This becomes more challenging in open-ended learning, where task boundaries
are unavailable and prompt count can grow continuously. A natural extension is to learn a task-free router
that selects prompts from the input, without explicit task labels.
Meanwhile, prompt capacity could be managed by adding new control
prompts only when needed and consolidating redundant ones, thereby supporting flexible cross-task interactions while keeping memory
overhead bounded.

The learned anchored prompt space also provides a way to study which previous tasks are
useful for a new task. By evaluating how task losses change across the anchored prompt
space, LAPS could identify regions associated with positive transfer or
interference and use this information to guide prompt selection. The learned anchored
space could also provide better initializations for new task prompts by reusing
nearby anchors or control prompts, reducing the amount of data and optimization
needed for adaptation. These directions would enable LAPS to reuse cross-task structure more explicitly
and improve the efficiency of future adaptation.


\clearpage
\newpage
\bibliographystyle{assets/plainnat}
\bibliography{ref}

\appendix
\section{Appendix}

\subsection{Properties of the Anchored Prompt Space}
\label{app:manifold_properties}

We analyze the optimization of the shared B\'ezier parameters.
During this stage, the LLM parameters and task-specific prompt anchors remain
fixed, and only the non-vertex control prompts are updated. We
adopt the following standard assumptions.

\begin{assumption}[Regularity of Task Losses]
\label{assump:regularity}
For every observed task $i$ and simplex coordinate
$\boldsymbol{\lambda}\in\Delta^{t-1}$,
the replay loss
$\ell_i(\boldsymbol{\lambda},\mathbf{C})$
is continuously differentiable with respect to the non-vertex control prompts
$\mathbf{C}$. Moreover, there exist constants $\beta>0$ and $M>0$ such that, for
all feasible $\mathbf{C}$ and $\mathbf{C}'$,
\begin{align}
\left\|
\nabla_{\mathbf{C}}
\ell_i(\boldsymbol{\lambda},\mathbf{C})
-
\nabla_{\mathbf{C}}
\ell_i(\boldsymbol{\lambda},\mathbf{C}')
\right\|_2
&\leq
\beta
\left\|
\mathbf{C}-\mathbf{C}'
\right\|_2,
\label{eq:loss_smooth_assumption}
\\
\left\|
\nabla_{\mathbf{C}}
\ell_i(\boldsymbol{\lambda},\mathbf{C})
\right\|_2
&\leq
M.
\label{eq:bounded_gradient_assumption}
\end{align}
We further assume that the expected smooth Tchebycheff objective is lower bounded, i.e.,
there exists a constant $G_{\inf}$ such that
$G_t(\mathbf{C}) \geq G_{\inf}$ for all feasible $\mathbf{C}$.
\end{assumption}

The regularity of the task losses first implies smoothness of the scalarized objective.

\begin{lemma}[Smoothness of the Smooth Tchebycheff objective]
\label{prop:stch_smoothness}
Under Assumption~\ref{assump:regularity}, for every fixed
$\boldsymbol{\lambda}\in\Delta^{t-1}$,
the pointwise objective
$g_{\mu}(\boldsymbol{\lambda},\mathbf{C})$
has a Lipschitz-continuous gradient with respect to $\mathbf{C}$ for every
$\mu>0$. Consequently,
\begin{equation}
G_t(\mathbf{C})
=
\mathbb{E}_{\boldsymbol{\lambda}\sim\Delta^{t-1}}
\left[
g_{\mu}(\boldsymbol{\lambda};\mathbf{C})
\right]
\end{equation}
is $L_G$-smooth for some finite constant $L_G$.
\end{lemma}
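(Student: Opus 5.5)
Fix $\boldsymbol{\lambda}\in\Delta^{t-1}$ and write $g_\mu(\boldsymbol{\lambda},\mathbf{C})=\mu\log\sum_{j}\exp(u_j(\mathbf{C})/\mu)$ with $u_j(\mathbf{C})=\lambda_j\ell_j(\boldsymbol{\lambda},\mathbf{C})$. Then $g_\mu=\mathrm{LSE}_\mu\circ u$, where $\mathrm{LSE}_\mu(\mathbf{u})=\mu\log\sum_j e^{u_j/\mu}$ is the standard soft-max. The plan is to treat this as a composition of a smooth outer map with a smooth, Lipschitz inner map, and obtain an explicit Lipschitz constant for the gradient. That constant will be uniform in $\boldsymbol{\lambda}$, and we then pass it through the expectation.

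\textbf{Step 1: gradient formula.} By the chain rule and Assumption~\ref{assump:regularity},
\[
\nabla_{\mathbf{C}}g_\mu(\boldsymbol{\lambda},\mathbf{C})=\sum_{j=1}^t w_j(\mathbf{C})\,\lambda_j\nabla_{\mathbf{C}}\ell_j(\boldsymbol{\lambda},\mathbf{C}),\qquad w_j=\mathrm{softmax}_j\bigl(u(\mathbf{C})/\mu\bigr).
\]
The weights satisfy $w_j\ge0$ and $\sum_j w_j=1$.

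\textbf{Step 2: Lipschitz bound.} For two points $\mathbf{C},\mathbf{C}'$, split the difference of gradients as
\[
\sum_j w_j(\mathbf{C})\lambda_j\bigl[\nabla\ell_j(\mathbf{C})-\nabla\ell_j(\mathbf{C}')\bigr]+\sum_j\bigl[w_j(\mathbf{C})-w_j(\mathbf{C}')\bigr]\lambda_j\nabla\ell_j(\mathbf{C}').
\]

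\emph{First term.} Since $\lambda_j\le1$ and the $w_j$ form a convex combination, this term is at most $\beta\|\mathbf{C}-\mathbf{C}'\|$.

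\emph{Second term.} Two facts control this term:
\begin{itemize}
\item The Jacobian of the softmax at scale $\mu$ is $\frac1\mu(\mathrm{diag}(w)-ww^\top)$. Its $\ell_1$ operator norm is at most $2/\mu$, or one can use $\|\cdot\|_2\le 1/\mu$.
\item Each $u_j$ is $M$-Lipschitz, because $\|\lambda_j\nabla\ell_j\|\le M$ by the bounded-gradient assumption~\eqref{eq:bounded_gradient_assumption}. Hence $\|u(\mathbf{C})-u(\mathbf{C}')\|_2\le\sqrt t\,M\|\mathbf{C}-\mathbf{C}'\|$.
\end{itemize}
It follows that $\|w(\mathbf{C})-w(\mathbf{C}')\|_1\le \frac{2\sqrt t M}{\mu}\|\mathbf{C}-\mathbf{C}'\|$. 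Bounding the second term by $\|w-w'\|_1\max_j\|\lambda_j\nabla\ell_j\|$ gives at most $\frac{2\sqrt t M^2}{\mu}\|\mathbf{C}-\mathbf{C}'\|$.

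Combining the two terms gives the pointwise constant $L_\mu=\beta+2\sqrt t M^2/\mu$, which is finite for every $\mu>0$ and independent of $\boldsymbol{\lambda}$.

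\textbf{Step 3: expectation.} Because $L_\mu$ does not depend on $\boldsymbol{\lambda}$ and $\|\nabla_{\mathbf{C}}g_\mu\|\le M$ uniformly, dominated convergence justifies $\nabla G_t=\mathbb{E}_{\boldsymbol{\lambda}}[\nabla_{\mathbf{C}}g_\mu]$. Jensen's inequality for the norm then yields
\[
\|\nabla G_t(\mathbf{C})-\nabla G_t(\mathbf{C}')\|\le \mathbb{E}\|\cdot\|\le L_\mu\|\mathbf{C}-\mathbf{C}'\|.
\]
So we may take $L_G=L_\mu$.

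\textbf{Main obstacle.} The only delicate point is the uniformity in $\boldsymbol{\lambda}$. The constants $\beta$ and $M$ must hold for all $\boldsymbol{\lambda}$, which we read into Assumption~\ref{assump:regularity} since it is stated for every $\boldsymbol{\lambda}$. We also need measurability in $\boldsymbol{\lambda}$ to interchange gradient and expectation. Both follow from continuity of $\mathbf{Z}_t(\boldsymbol{\lambda})$ in $\boldsymbol{\lambda}$, since it is a polynomial.
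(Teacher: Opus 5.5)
Your proposal is correct and follows essentially the same route as the paper. You write $\nabla_{\mathbf{C}}g_\mu$ as a softmax-weighted combination of the $\lambda_i\nabla_{\mathbf{C}}\ell_i$, bound the gradient-difference part by $\beta$ and the weight-variation part by $M$ times a $1/\mu$ factor from the softmax Jacobian, and pass the $\boldsymbol{\lambda}$-uniform constant through the expectation---you simply make the constant and the gradient--expectation interchange explicit where the paper only asserts them.

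There is one small bookkeeping slip in Step 2. Pairing an $\ell_1$ operator-norm bound with an $\ell_2$ bound on $u(\mathbf{C})-u(\mathbf{C}')$ does not give the factor $\sqrt{t}$; the facts as you stated them yield $t$. This does not hurt the result. The $j$-th entry of $(\mathrm{diag}(w)-ww^\top)\Delta u$ is $w_j\bigl(\Delta u_j-\sum_k w_k\Delta u_k\bigr)$, so its $\ell_1$ norm is at most $2\max_j|\Delta u_j|\le 2M\|\mathbf{C}-\mathbf{C}'\|$. Hence $\beta+2M^2/\mu$ already works, and your stated constant, and in any case finiteness, stands.
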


\begin{proof}
For a fixed $\boldsymbol{\lambda}$, the gradient of the smooth Tchebycheff objective $g_{\mu}$ is
\begin{equation}
\nabla_{\mathbf{C}}g_{\mu}
=
\sum_{i=1}^{t}
q_i\lambda_i
\nabla_{\mathbf{C}}
\ell_i.
\end{equation}
The first condition in Assumption~\ref{assump:regularity} ensures that the
gradient of every task loss varies Lipschitz-continuously with
$\mathbf{C}$. The second condition bounds the variation introduced through
the softmax coefficients $q_i$, whose derivatives are scaled by $1/\mu$.
For any fixed $\mu>0$, both contributions are therefore bounded, implying
that
$\nabla_{\mathbf{C}}g_{\mu}$
is Lipschitz continuous.

The expectation over
$\boldsymbol{\lambda}\sim\Delta^{t-1}$
preserves Lipschitz continuity of the gradient. Hence,
$G_t(\mathbf{C})$ is $L_G$-smooth for some finite $L_G$.
\end{proof}
Lemma~\ref{prop:stch_smoothness} establishes the smoothness of the
smooth Tchebycheff objective. We next exploit the B\'ezier parameterization to
make its smoothness constant explicit, which will be used in the subsequent
convergence analysis.
\begin{lemma}[Smoothness Bound for the B\'ezier--Smooth Tchebycheff Objective]
\label{prop:bezier_stch_smoothness}
Under Assumption~\ref{assump:regularity}, if
$\bm{\lambda}$ is sampled uniformly from $\Delta^{t-1}$, the smoothness
constant in Lemma~\ref{prop:stch_smoothness} satisfies
\begin{equation}
L_G
\leq
R_{t,k}
\left(
\beta+\frac{M^2}{\mu}
\right),
\label{eq:explicit_smoothness_bound}
\end{equation}
where
\begin{equation}
R_{t,k}
=
\frac{(k!)^2(t-1)!}{(t+2k-1)!}
\left[
\frac{4^k(t/2)_k}{k!}
-
t\binom{2k}{k}
\right],
\label{eq:bezier_geometry_factor}
\end{equation}
and $(a)_k=a(a+1)\cdots(a+k-1)$ denotes the rising factorial.
\end{lemma}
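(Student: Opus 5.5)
The plan is to exploit the fact that the control prompts enter the loss only through the B\'ezier prompt $\mathbf{Z}_t(\bm{\lambda})$ in Eq.~(\ref{eq:anchored_bezier}), and that this map is \emph{affine} in $\mathbf{C}_t$ with coefficients given by the non-vertex Bernstein weights
\begin{equation*}
b_{\bm{\alpha}}(\bm{\lambda})=\frac{k!}{\prod_{j}\alpha_j!}\prod_{j}\lambda_j^{\alpha_j}.
\end{equation*}
I will read the constants $\beta$ and $M$ of Assumption~\ref{assump:regularity} as prompt-level constants, i.e.\ as bounds on $\nabla_{\mathbf{Z}}\ell_i$ and on its Lipschitz constant. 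The chain rule then gives
\begin{equation*}
\nabla_{\mathbf{C}_{\bm{\alpha}}}\ell_i=b_{\bm{\alpha}}(\bm{\lambda})\,\nabla_{\mathbf{Z}}\ell_i ,
\end{equation*}
so $\nabla_{\mathbf{C}}\ell_i=\mathbf{b}(\bm{\lambda})\otimes\nabla_{\mathbf{Z}}\ell_i$. Every Lipschitz or gradient bound in $\mathbf{C}$ therefore picks up a factor $\|\mathbf{b}(\bm{\lambda})\|_2$. Two such factors appear: one from the change $\|\mathbf{Z}-\mathbf{Z}'\|\le\|\mathbf{b}\|\,\|\mathbf{C}-\mathbf{C}'\|$, and one from the outer tensor structure. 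The pointwise smoothness constant thus has the form $\|\mathbf{b}(\bm{\lambda})\|_2^2\cdot(\text{prompt-level constant})$.

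\textbf{Step 1: pointwise constant for $g_\mu$.} Following the proof of Lemma~\ref{prop:stch_smoothness}, I write
\begin{equation*}
\nabla_{\mathbf{C}}g_\mu=\sum_i q_i\lambda_i\nabla_{\mathbf{C}}\ell_i ,
\end{equation*}
with softmax weights $q_i\propto\exp(\lambda_i\ell_i/\mu)$. I then bound the difference of this gradient at $\mathbf{C}$ and $\mathbf{C}'$ by splitting it into two terms.
\begin{itemize}
\item The term with $q$ held fixed is at most $\sum_i q_i\lambda_i\,\beta\|\mathbf{b}\|^2\|\mathbf{C}-\mathbf{C}'\|\le\beta\|\mathbf{b}\|^2\|\mathbf{C}-\mathbf{C}'\|$, using $\lambda_i\le 1$ and $\sum_i q_i=1$.
\item The term from the variation of $q$ uses that the softmax Jacobian $\tfrac1\mu(\mathrm{diag}(q)-qq^\top)$ has operator norm at most $1/\mu$ in the appropriate weighted sense. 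Combined with $|\lambda_i\,\Delta\ell_i|\le M\|\mathbf{b}\|\,\|\mathbf{C}-\mathbf{C}'\|$ and the gradient bound $M\|\mathbf{b}\|$, this term is at most $\tfrac{M^2}{\mu}\|\mathbf{b}\|^2\|\mathbf{C}-\mathbf{C}'\|$.
\end{itemize}
Equivalently, if twice differentiable, the Hessian is $\sum_i q_i\lambda_i\nabla^2\ell_i$ plus a $1/\mu$ times a covariance of the vectors $\lambda_i\nabla\ell_i$ under $q$; this covariance has norm at most $M^2\|\mathbf{b}\|^2$. Either route gives the pointwise constant
\begin{equation*}
L(\bm{\lambda})\le\|\mathbf{b}(\bm{\lambda})\|_2^2\,(\beta+M^2/\mu).
\end{equation*}
Averaging over $\bm{\lambda}$ preserves the bound, so $L_G\le\mathbb{E}\|\mathbf{b}(\bm{\lambda})\|_2^2\,(\beta+M^2/\mu)$.

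\textbf{Step 2: computing the geometry factor.} Uniform sampling on $\Delta^{t-1}$ is $\mathrm{Dirichlet}(1,\dots,1)$, whose moments are
\begin{equation*}
\mathbb{E}\Bigl[\prod_j\lambda_j^{m_j}\Bigr]=\frac{(t-1)!\prod_j m_j!}{(t-1+\sum_j m_j)!}.
\end{equation*}
Taking $m_j=2\alpha_j$ gives
\begin{equation*}
\mathbb{E}\,b_{\bm{\alpha}}^2=\frac{(k!)^2(t-1)!}{(t+2k-1)!}\prod_j\binom{2\alpha_j}{\alpha_j}.
\end{equation*}
Summing over all $|\bm{\alpha}|=k$ and recognizing the generating function of the central binomials, $\sum_m\binom{2m}{m}x^m=(1-4x)^{-1/2}$, I will use
\begin{equation*}
\sum_{|\bm{\alpha}|=k}\prod_j\binom{2\alpha_j}{\alpha_j}=[x^k](1-4x)^{-t/2}=\frac{4^k(t/2)_k}{k!}.
\end{equation*}
Subtracting the $t$ vertex multi-indices $k\bm{e}_j$, each contributing $\binom{2k}{k}$, yields exactly $R_{t,k}$ in Eq.~(\ref{eq:bezier_geometry_factor}).

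The main obstacle is Step 1's treatment of the softmax-weight variation. One has to make sure that the $1/\mu$ term produces $M^2\|\mathbf{b}\|^2$ rather than something growing with $t$. I would handle this through the variance form: $\mathrm{Var}_q\le\max_i\|\lambda_i\nabla_{\mathbf{C}}\ell_i\|^2$. I also need to state clearly the prompt-level interpretation of $\beta$ and $M$ that makes the $\|\mathbf{b}\|^2$ factorization legitimate. Step 2 is then a routine Dirichlet-moment and generating-function computation.
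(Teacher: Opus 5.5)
Your proposal is correct and follows essentially the same route as the paper's proof. Affine dependence of $\mathbf{Z}_t(\bm{\lambda})$ on the control prompts gives the factor $\rho_{t,k}^2(\bm{\lambda})=\sum_{\bm{\alpha}}\bigl(B^k_{\bm{\alpha}}(\bm{\lambda})\bigr)^2$, the softmax Hessian (a $q$-weighted sum of Hessians plus $\tfrac{1}{\mu}$ times a covariance) gives the pointwise constant $\rho_{t,k}^2(\bm{\lambda})(\beta+M^2/\mu)$, and Dirichlet moments with the central-binomial generating function, minus the $t$ vertex terms, give $\mathbb{E}[\rho_{t,k}^2]=R_{t,k}$.

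Reading $\beta$ and $M$ as constants with respect to the prompt $\mathbf{Z}$ is exactly what the paper does tacitly when it asserts $\|\nabla_{\mathbf C}\ell_{i,t}\|\le M\rho_{t,k}$ and $\|\nabla^2_{\mathbf C}\ell_{i,t}\|_{\mathrm{op}}\le\beta\rho_{t,k}^2$; the literal $\mathbf{C}$-level wording of Assumption~\ref{assump:regularity} would not yield the factor $R_{t,k}<1$. Your Lipschitz-difference version of Step~1 also avoids the twice-differentiability that the paper's Hessian formula assumes without saying so.
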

\begin{proof}
For fixed $\bm{\lambda}$, the B\'ezier prompt can be written as
\begin{equation}
\mathbf Z_t(\bm{\lambda})
=
\sum_{j=1}^{t}
\lambda_j^k\mathbf{P}_{j,t}
+
\sum_{\bm{\alpha}}
B_{\bm{\alpha}}^k(\bm{\lambda})
\mathbf C_{\bm{\alpha},t}.
\end{equation}
Since the task anchors $\{\mathbf{P}_{j,t}\}_{j=1}^{t}$ and
$\bm{\lambda}$ are fixed during control prompt optimization, the first term is
constant with respect to $\mathbf C_t$. Hence,
$\mathbf Z_t(\bm{\lambda})$ is affine in the non-vertex control prompts.
Define
\begin{equation}
\rho_{t,k}^2(\bm{\lambda})
=
\sum_{\bm{\alpha}}
\left(
B_{\bm{\alpha}}^k(\bm{\lambda})
\right)^2,
\label{eq:bezier_control_sensitivity}
\end{equation}
where the sum is over the non-vertex degree-$k$ multi-indices. Since the
B\'ezier map is affine in the control prompts, its Jacobian satisfies
\begin{equation}
\left\|
J_{\mathbf Z,\mathbf C}(\bm{\lambda})
\right\|_{\mathrm op}^2
=
\rho_{t,k}^2(\bm{\lambda}).
\end{equation}
Therefore, the assumed gradient bound and $\beta$-smoothness of each replay
loss imply
\begin{align}
\left\|
\nabla_{\mathbf C}
\mathcal L_{i,t}^{\mathrm r}
\right\|_{\mathrm F}
&\leq
M\rho_{t,k}(\bm{\lambda}),
\\
\left\|
\nabla_{\mathbf C}^2
\mathcal L_{i,t}^{\mathrm r}
\right\|_{\mathrm op}
&\leq
\beta\rho_{t,k}^2(\bm{\lambda}).
\label{eq:control_loss_bounds}
\end{align}

For fixed $\bm{\lambda}$, let
\begin{equation}
g_\mu(\bm{\lambda},\mathbf C)
=
\mu\log
\sum_{i=1}^{t}
\exp
\left(
\frac{
\lambda_i\ell_{i,t}
}{\mu}
\right),
\end{equation}
and define the corresponding softmax weights
\begin{equation}
q_i
=
\frac{
\exp(\lambda_i\ell_{i,t}/\mu)
}{
\sum_{j=1}^{t}
\exp(\lambda_j\ell_{j,t}/\mu)
}.
\end{equation}
Its Hessian with respect to the control prompts can be written as
\begin{equation}
\nabla_{\mathbf C}^{2}g_\mu
=
\sum_{i=1}^{t}
q_i\lambda_i
\nabla_{\mathbf C}^{2}\ell_{i,t}+
\frac{1}{\mu}
\operatorname{Cov}_{q}
\left(
\lambda_i
\nabla_{\mathbf C}\ell_{i,t}
\right).
\label{eq:stch_hessian}
\end{equation}
Since $0\leq\lambda_i\leq1$ and $\sum_iq_i=1$, applying
Eq.~(\ref{eq:control_loss_bounds}) gives
\begin{equation}
\left\|
\nabla_{\mathbf C}^{2}
g_\mu(\bm{\lambda},\mathbf C)
\right\|_{\mathrm op}
\leq
\rho_{t,k}^2(\bm{\lambda})
\left(
\beta+\frac{M^2}{\mu}
\right).
\label{eq:pointwise_smoothness_bound}
\end{equation}

Taking expectation over uniformly sampled
$\bm{\lambda}\in\Delta^{t-1}$ therefore yields
\begin{equation}
L_G
\leq
\mathbb E_{\bm{\lambda}}
\left[
\rho_{t,k}^2(\bm{\lambda})
\right]
\left(
\beta+\frac{M^2}{\mu}
\right).
\label{eq:expected_smoothness_bound}
\end{equation}
It remains to evaluate the expectation. For uniform sampling over the simplex,
the Dirichlet moment identity gives
\begin{equation}
\mathbb E_{\bm{\lambda}}
\left[
\prod_{j=1}^{t}\lambda_j^{2\alpha_j}
\right]
=
\frac{(t-1)!}{(t+2k-1)!}
\prod_{j=1}^{t}(2\alpha_j)!.
\end{equation}
Using
\begin{equation}
B_{\bm{\alpha}}^k(\bm{\lambda})
=
\frac{k!}{\prod_{j=1}^{t}\alpha_j!}
\prod_{j=1}^{t}\lambda_j^{\alpha_j},
\end{equation}
we obtain
\begin{equation}
\mathbb E_{\bm{\lambda}}
\left[
\rho_{t,k}^2(\bm{\lambda})
\right]
=
\frac{(k!)^2(t-1)!}{(t+2k-1)!}
\sum_{\bm{\alpha}}
\prod_{j=1}^{t}
\binom{2\alpha_j}{\alpha_j},
\label{eq:rho_expectation}
\end{equation}
where the sum again excludes the vertex multi-indices.

The generating function identity
\begin{equation}
\sum_{\substack{\bm{\alpha}\geq0\\
\sum_j\alpha_j=k}}
\prod_{j=1}^{t}
\binom{2\alpha_j}{\alpha_j}
=
\frac{4^k(t/2)_k}{k!}
\end{equation}
sums over all degree-$k$ multi-indices. Each of the $t$ excluded vertex
indices contributes $\binom{2k}{k}$. Hence,
\begin{equation}
\mathbb E_{\bm{\lambda}}
\left[
\rho_{t,k}^2(\bm{\lambda})
\right]
=
\frac{(k!)^2(t-1)!}{(t+2k-1)!}
\left[
\frac{4^k(t/2)_k}{k!}
-
t\binom{2k}{k}
\right]
=
R_{t,k}.
\end{equation}
Substituting this expression into
Eq.~(\ref{eq:expected_smoothness_bound}) proves
Eq.~(\ref{eq:explicit_smoothness_bound}).
\end{proof}

We can now establish convergence of the shared control prompt optimization.
Since the induced objective is generally non-convex, the appropriate guarantee
is convergence to a first-order stationary point rather than global
optimality.

\begin{theorem}[Convergence of B\'ezier--Smooth Tchebycheff Optimization]
\label{thm:laps_convergence}
Given Lemma~\ref{prop:bezier_stch_smoothness}, suppose that
$G_t(\mathbf C)$ is lower bounded by $G_{\inf}$.
Then, gradient descent with learning rate $\eta=1/L_G$ satisfies
\begin{equation}
\frac{1}{K}
\sum_{s=0}^{K-1}
\left\|
\nabla G_t(\mathbf C^{(s)})
\right\|_{\mathrm F}^2
\leq
\frac{
2R_{t,k}
\left(
\beta+\frac{M^2}{\mu}
\right)
\left(
G_t(\mathbf C^{(0)})-G_{\inf}
\right)
}{
K
}.
\label{eq:laps_geometry_convergence}
\end{equation}
\end{theorem}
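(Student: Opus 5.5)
This is the standard non-convex gradient descent argument, instantiated with the explicit smoothness constant from Lemma~\ref{prop:bezier_stch_smoothness}.

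\textbf{Step 1: descent lemma.} By Lemma~\ref{prop:stch_smoothness}, $G_t$ is $L_G$-smooth. Hence for any $\mathbf C,\mathbf C'$,
\[
G_t(\mathbf C')\le G_t(\mathbf C)+\langle\nabla G_t(\mathbf C),\mathbf C'-\mathbf C\rangle+\tfrac{L_G}{2}\|\mathbf C'-\mathbf C\|_{\mathrm F}^2 .
\]
We apply this with $\mathbf C=\mathbf C^{(s)}$ and the update $\mathbf C^{(s+1)}=\mathbf C^{(s)}-\eta\nabla G_t(\mathbf C^{(s)})$, where $\eta=1/L_G$. This gives the per-step decrease
\[
G_t(\mathbf C^{(s+1)})\le G_t(\mathbf C^{(s)})-\tfrac{1}{2L_G}\|\nabla G_t(\mathbf C^{(s)})\|_{\mathrm F}^2 .
\]

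\textbf{Step 2: telescoping.} Sum the per-step decrease over $s=0,\dots,K-1$. The function values telescope, and lower boundedness $G_t(\mathbf C^{(K)})\ge G_{\inf}$ (Assumption~\ref{assump:regularity}) controls the final term. This yields
\[
\sum_{s=0}^{K-1}\|\nabla G_t(\mathbf C^{(s)})\|_{\mathrm F}^2\le 2L_G\bigl(G_t(\mathbf C^{(0)})-G_{\inf}\bigr).
\]
Dividing by $K$ gives an averaged bound with constant $2L_G$.

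\textbf{Step 3: substitute the explicit constant.} Insert $L_G\le R_{t,k}(\beta+M^2/\mu)$ from Eq.~(\ref{eq:explicit_smoothness_bound}) into the right-hand side of Step 2. The result is exactly Eq.~(\ref{eq:laps_geometry_convergence}).

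\textbf{Main obstacle: the choice of $L_G$ in the step size.} The descent inequality in Step 1 needs the step size to satisfy $\eta\le 1/L$ for some valid smoothness constant $L$. Lemma~\ref{prop:bezier_stch_smoothness} only gives an upper bound on the smoothness constant, so it matters which value "$L_G$" denotes. The cleanest resolution is to take $L_G:=R_{t,k}(\beta+M^2/\mu)$ itself. This is a valid smoothness constant, because any upper bound on the Hessian operator norm is one, so the descent lemma holds with $\eta=1/L_G$.

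A subtle point is that Eq.~(\ref{eq:expected_smoothness_bound}) bounds the expected pointwise Hessian norm. That bound dominates $\|\nabla^2 G_t\|_{\mathrm{op}}$ by Jensen's inequality, since $\nabla^2G_t=\mathbb E[\nabla^2 g_\mu]$ when differentiation and expectation can be interchanged. That interchange is justified by the uniform bounds in Assumption~\ref{assump:regularity}. I will state this explicitly in the proof. With that settled, Steps 1–3 are routine.
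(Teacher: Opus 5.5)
Your proposal is correct and matches the paper's proof step for step. The descent lemma with $\eta=1/L_G$ gives a per-step decrease of $\tfrac{1}{2L_G}\|\nabla G_t(\mathbf C^{(s)})\|_{\mathrm F}^2$; telescoping against $G_{\inf}$ yields the $2L_G(G_t(\mathbf C^{(0)})-G_{\inf})/K$ rate; and Lemma~\ref{prop:bezier_stch_smoothness} supplies $L_G\le R_{t,k}(\beta+M^2/\mu)$.

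Your concern about which $L_G$ is meant is not a real obstacle. With the constant from Lemma~\ref{prop:stch_smoothness}, the result follows by substitution because $G_t(\mathbf C^{(0)})-G_{\inf}\ge 0$, which is the paper's route. Your choice $L_G:=R_{t,k}(\beta+M^2/\mu)$ also works and gives the bound with equality.
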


\begin{proof}
By Lemma~\ref{prop:stch_smoothness}, $G_t$ is $L_G$-smooth.
Therefore, for the gradient descent update
\begin{equation}
\mathbf C^{(s+1)}
=
\mathbf C^{(s)}
-
\eta
\nabla G_t(\mathbf C^{(s)}),
\end{equation}
the descent lemma gives
\begin{equation}
G_t(\mathbf C^{(s+1)})
\leq
G_t(\mathbf C^{(s)})
+
\left\langle
\nabla G_t(\mathbf C^{(s)}),
\mathbf C^{(s+1)}-\mathbf C^{(s)}
\right\rangle+
\frac{L_G}{2}
\left\|
\mathbf C^{(s+1)}-\mathbf C^{(s)}
\right\|_{\mathrm F}^{2}.
\end{equation}
Substituting the update yields
\begin{equation}
G_t(\mathbf C^{(s+1)})
\leq
G_t(\mathbf C^{(s)})
-
\eta
\left(
1-\frac{L_G\eta}{2}
\right)
\left\|
\nabla G_t(\mathbf C^{(s)})
\right\|_{\mathrm F}^{2}.
\end{equation}
For $\eta\leq 1/L_G$,
\begin{equation}
G_t(\mathbf C^{(s+1)})
\leq
G_t(\mathbf C^{(s)})
-
\frac{\eta}{2}
\left\|
\nabla G_t(\mathbf C^{(s)})
\right\|_{\mathrm F}^{2}.
\label{eq:laps_descent}
\end{equation}

Summing Eq.~(\ref{eq:laps_descent}) over
$s=0,\ldots,K-1$ gives
\begin{equation}
\frac{\eta}{2}
\sum_{s=0}^{K-1}
\left\|
\nabla G_t(\mathbf C^{(s)})
\right\|_{\mathrm F}^{2}
\leq
G_t(\mathbf C^{(0)})
-
G_t(\mathbf C^{(K)}).
\end{equation}
Since $G_t$ is lower bounded by $G_{\inf}$,
\begin{equation}
\frac{1}{K}
\sum_{s=0}^{K-1}
\left\|
\nabla G_t(\mathbf C^{(s)})
\right\|_{\mathrm F}^{2}
\leq
\frac{
2\left(
G_t(\mathbf C^{(0)})-G_{\inf}
\right)
}{
\eta K
}.
\label{eq:laps_general_rate}
\end{equation}

Choosing $\eta=1/L_G$ gives
\begin{equation}
\frac{1}{K}
\sum_{s=0}^{K-1}
\left\|
\nabla G_t(\mathbf C^{(s)})
\right\|_{\mathrm F}^{2}
\leq
\frac{
2L_G
\left(
G_t(\mathbf C^{(0)})-G_{\inf}
\right)
}{
K
}.
\end{equation}
Using Lemma~\ref{prop:bezier_stch_smoothness},
\begin{equation}
L_G
\leq
R_{t,k}
\left(
\beta+\frac{M^2}{\mu}
\right),
\end{equation}
we obtain
\begin{equation}
\frac{1}{K}
\sum_{s=0}^{K-1}
\left\|
\nabla G_t(\mathbf C^{(s)})
\right\|_{\mathrm F}^{2}
\leq
\frac{
2R_{t,k}
\left(
\beta+\frac{M^2}{\mu}
\right)
\left(
G_t(\mathbf C^{(0)})-G_{\inf}
\right)
}{
K
},
\end{equation}
which proves Eq.~(\ref{eq:laps_geometry_convergence}).
\end{proof}

\begin{remark}[Scope of the Convergence Guarantee]
\label{remark:convergence_scope}
Theorem~\ref{thm:laps_convergence} characterizes optimization of the shared
non-vertex B\'ezier control prompts while the LLM parameters and task-specific
anchors remain fixed. The guarantee concerns convergence to first-order
stationarity of the expected smooth Tchebycheff objective rather than global optimality of
the underlying non-convex problem. Importantly, the bound makes the
LAPS-specific dependence explicit: $R_{t,k}$ captures the sensitivity induced
by the B\'ezier parameterization, while $M^2/\mu$ reflects the additional
curvature introduced by the smooth Tchebycheff objective.
\end{remark}
\subsection{Bayesian Optimization for Positive Transfer Discovery}
\label{app:bo_details}

After constructing the final anchored prompt space, LAPS performs Bayesian
optimization (BO) independently for each task. On the TRACE benchmark, the
search variable is
\begin{equation}
\boldsymbol{\lambda}
=
(\lambda_1,\ldots,\lambda_8)
\in\Delta^7,
\qquad
\lambda_j\geq0,
\qquad
\sum_{j=1}^{8}\lambda_j=1.
\label{eq:bo_simplex}
\end{equation}
Given $\boldsymbol{\lambda}$, the cubic B\'ezier simplex deterministically
constructs the corresponding prompt
$\mathbf{Z}_{T}(\boldsymbol{\lambda})$.

\noindent\textbf{Task-wise Search.}
We perform an independent procedure for each task
$\mathcal{T}_j$. Thus, each task obtains its own coordinate
$\boldsymbol{\lambda}_{j,T}^{\star}$ rather than sharing a single global
coordinate. The task-specific search objective is
\begin{equation}
\boldsymbol{\lambda}_{j,T}^{\star}
=
\arg\max_{\boldsymbol{\lambda}\in\Delta^7}
\mathrm{Val}_j(\boldsymbol{\lambda}),
\label{eq:app_bo_objective}
\end{equation}
where
$\mathrm{Val}_j(\boldsymbol{\lambda})$
denotes the validation score of
$\mathbf{Z}_{T}(\boldsymbol{\lambda})$
on task $\mathcal{T}_j$, measured by its task-specific evaluation metric;
the metric for each task is reported in Table~\ref{tab:dataset_details}.

\noindent\textbf{Initial Design.}
Each run starts from $16$ simplex coordinates: all eight task vertices,
the simplex centroid
$\boldsymbol{\lambda}_{\mathrm{center}}
=(1/8,\ldots,1/8)$,
and seven interior coordinates generated using a Sobol sequence and mapped
onto the simplex. Explicitly including all task vertices ensures that the
original task-specific prompts are evaluated during the search.

\noindent\textbf{Candidate Evaluation.}
For each candidate $\boldsymbol{\lambda}$, we construct
$\mathbf{Z}_{T}(\boldsymbol{\lambda})$
and evaluate it using the task-specific metric on at most $500$ validation
examples with sampling temperature $0.1$. Each candidate is evaluated over
three independent inference runs, from which we record the empirical mean and
variance. The empirical mean serves as the Bayesian optimization response, while the empirical
variance estimates candidate-dependent observation noise.

\noindent\textbf{Simplex-Aware Gaussian Process.}
We fit a Gaussian process (GP) to the validation score
$\mathrm{Val}_j(\boldsymbol{\lambda})$.
Before GP fitting, each simplex coordinate is transformed elementwise as
\begin{equation}
\widetilde{\boldsymbol{\lambda}}
=
\sqrt{\boldsymbol{\lambda}}
=
\left(
\sqrt{\lambda_1},\ldots,\sqrt{\lambda_8}
\right).
\label{eq:sqrt_simplex}
\end{equation}
Since $\sum_j\lambda_j=1$, the transformed coordinates satisfy
$\|\widetilde{\boldsymbol{\lambda}}\|_2=1$, mapping the simplex to the
positive orthant of the unit sphere. Euclidean distance between the
square root coordinates is proportional to the Hellinger distance between
the corresponding simplex coordinates.

We use a Constant Kernel multiplied by an RBF kernel:
\begin{equation}
K(
\boldsymbol{\lambda},
\boldsymbol{\lambda}'
)
=
c
\exp
\left(
-
\frac{
\left\|
\sqrt{\boldsymbol{\lambda}}
-
\sqrt{\boldsymbol{\lambda}'}
\right\|_2^2
}{
2r^2
}
\right),
\label{eq:bo_kernel}
\end{equation}
where $c$ and $r$ denote the kernel amplitude and length scale,
respectively. We initialize $c=1.0$ and $r=0.5$, with optimization ranges
$c\in[0.1,10]$ and $r\in[0.05,5.0]$. Both hyperparameters are optimized by
maximizing the GP log marginal likelihood. The empirical variance from
repeated inference is provided to the GP as heteroscedastic observation
noise.

\noindent\textbf{Expected Improvement.}
At each Bayesian optimization iteration, we generate $4{,}096$ candidate simplex coordinates.
Half are sampled from a standard Dirichlet distribution for broad simplex
coverage, one quarter from a sparse Dirichlet distribution to emphasize
boundaries and vertices, and the remaining quarter around the current best
coordinate for local refinement.

Let
$m_j(\boldsymbol{\lambda})$
and
$s_j(\boldsymbol{\lambda})$
denote the GP predictive mean and standard deviation for task
$\mathcal{T}_j$, and let $u_j^{+}$ denote the best validation score observed
so far. Candidates are ranked using Expected
Improvement~\citep{jones1998efficient}:
\begin{align}
\operatorname{EI}_j(\boldsymbol{\lambda})
&=
\left(
m_j(\boldsymbol{\lambda})
-
u_j^{+}
-
\xi
\right)
\Phi(z)
+
s_j(\boldsymbol{\lambda})\phi(z),
\label{eq:expected_improvement}
\\
z
&=
\frac{
m_j(\boldsymbol{\lambda})
-
u_j^{+}
-
\xi
}{
s_j(\boldsymbol{\lambda})
},
\label{eq:ei_z}
\end{align}
where $\Phi(\cdot)$ and $\phi(\cdot)$ denote the cumulative distribution
function and probability density function of the standard Gaussian,
respectively. We set $\xi=0.05$ and
$\operatorname{EI}_j(\boldsymbol{\lambda})=0$ whenever
$s_j(\boldsymbol{\lambda})=0$.

We select at most four candidates per iteration. Within each batch,
selected coordinates must satisfy a minimum pairwise distance to avoid
spending evaluations on nearly identical locations.
Each task is allocated at most $100$ candidate evaluations, except
MeetingBank, whose budget is $50$. Search terminates early if the maximum EI
remains below $0.005$ for three consecutive iterations.

\noindent\textbf{Final Candidate.}
After Bayesian optimization terminates, we retain the three coordinates with the highest observed
validation scores. If the vertex corresponding to the target task is not among
them, it is added as an additional candidate, yielding three or four
finalists. Each finalist is reevaluated on the complete validation set using
eight fresh inference runs. The coordinate with the highest mean validation
score is selected as $\boldsymbol{\lambda}_{j,T}^{\star}$, and
$\mathbf{Z}_{T}(\boldsymbol{\lambda}_{j,T}^{\star})$
is used as the final soft prompt for task $\mathcal{T}_{j}$.

\begin{table*}[t!]
    \centering
    \caption{Comparison of trainable components and parameter counts.}
    \label{tab:trainable_parameters}
    \vspace{-8pt}

    \small
    \setlength{\tabcolsep}{6pt}
    \renewcommand{\arraystretch}{1.1}

    \resizebox{\textwidth}{!}{%
    \begin{tabular}{lcccrrr}
        \toprule
        \multirow{2}{*}{\textbf{Method}}
        & \multirow{2}{*}{\textbf{Replay}}
        & \multirow{2}{*}{\textbf{Backbone}}
        & \multirow{2}{*}{\textbf{Soft Prompt}}
        & \multicolumn{3}{c}{\textbf{\# Trainable Parameters (M)}} \\
        \cmidrule(lr){5-7}
        & & & 
        & \textbf{Qwen3-1.7B}
        & \textbf{Qwen3-4B}
        & \textbf{Qwen3-8B} \\
        \midrule

        SFT
& \xmark & \cmark & \xmark
& $1{,}720.6$
& $4{,}022.5$
& $8{,}190.7$ \\

Replay
& \cmark & \cmark & \xmark
& $1{,}720.6$
& $4{,}022.5$
& $8{,}190.7$ \\

LwF
& \xmark & \cmark & \xmark
& $1{,}720.6$
& $4{,}022.5$
& $8{,}190.7$ \\

OPR-RU
& \cmark & \cmark & \xmark
& $1{,}720.6$
& $4{,}022.5$
& $8{,}190.7$ \\

OPR-SC
& \cmark & \cmark & \xmark
& $1{,}720.6$
& $4{,}022.5$
& $8{,}190.7$ \\

ProgPrompt
& \xmark & \xmark & \cmark
& $0.5$
& $0.7$
& $1.0$ \\

ProMoT
& \xmark & \cmark & \cmark
& $1{,}721.1$
& $4{,}023.1$
& $8{,}191.7$ \\

\midrule

{LAPS (Ours)}
& \cmark & \cmark & \cmark
& ${1{,}727.9}$
& ${4{,}031.7}$
& ${8{,}205.5}$ \\
        \bottomrule
    \end{tabular}%
    }
\end{table*}

\subsection{Method Characteristics and Parameter Efficiency}\label{chara_para_count}

Table~\ref{tab:trainable_parameters} compares the trainable components and
parameter counts of different methods. Although LAPS jointly optimizes the
backbone and soft prompts, it introduces only $7.3$M, $9.2$M, and $14.8$M
additional parameters for Qwen3-1.7B, Qwen3-4B, and Qwen3-8B, respectively.
These additions account for only approximately $0.43\%$, $0.23\%$, and
$0.18\%$ of the corresponding backbone sizes. 

\subsection{Dataset Details}
\label{app:dataset_details}

We conduct experiments on the eight tasks from TRACE, covering diverse
domains, languages, and evaluation metrics. Samples exceeding
the maximum sequence length of 2,048 tokens are removed. Table~\ref{tab:dataset_details}
summarizes the statistics of each task. We use the official task-specific evaluation metrics from TRACE, including
Accuracy, ROUGE-L, Edit Similarity, and SARI~\citep{xu2016optimizing}.
Edit Similarity is computed from normalized Levenshtein distance~\citep{levenshtein1966binary}.

\begin{table}[t]
    \centering
    \caption{Statistics of the eight continual adaptation tasks used in our
    experiments.}
    \vspace{-10pt}
    \label{tab:dataset_details}

    \small
    \setlength{\tabcolsep}{9pt}
    \renewcommand{\arraystretch}{1.05}

    \resizebox{\linewidth}{!}{%
    \begin{tabular}{lccccc}
        \toprule
        \textbf{Dataset}
        & \textbf{Source}
        & \textbf{Average Length}
        & \textbf{Metric}
        & \textbf{Language}
        & \textbf{Size} \\
        \midrule

        C-STANCE    & Social Media & 109  & Accuracy        & Chinese & 5,000 \\
        FOMC        & Finance      & 76   & Accuracy        & English & 5,000 \\
        MeetingBank & Meeting      & 3,550& ROUGE-L         & English & 5,000 \\
        Py150       & GitHub       & 814  & Edit Similarity & English & 5,000 \\
        ScienceQA   & Science      & 285  & Accuracy        & English & 5,000 \\
        NumGLUE-cm  & Math         & 50   & Accuracy        & English & 5,000 \\
        NumGLUE-ds  & Math         & 39   & Accuracy        & English & 5,000 \\
        20Minuten   & News         & 739  & SARI            & German  & 5,000 \\

        \bottomrule
    \end{tabular}%
    }
    \vspace{-8pt}
\end{table}

The benchmark contains both classification and generation tasks and spans
Chinese, English, and German. This diversity allows us to evaluate continual
adaptation under heterogeneous task distributions rather than within a
single domain or output format.

\begin{table*}[t]
    \centering
    \caption{
    Task-specific simplex coordinates selected by Bayesian optimization. 
    }
    \label{tab:selected_lambda}
    \vspace{-10pt}
    \scriptsize
    \setlength{\tabcolsep}{3.6pt}
    \renewcommand{\arraystretch}{1.05}

    \resizebox{\textwidth}{!}{%
    \begin{tabular}{llcccccccc}
        \toprule
        \textbf{Model}
        & \textbf{Target Task}
        & $\boldsymbol{\lambda}_1$
        & $\boldsymbol{\lambda}_2$
        & $\boldsymbol{\lambda}_3$
        & $\boldsymbol{\lambda}_4$
        & $\boldsymbol{\lambda}_5$
        & $\boldsymbol{\lambda}_6$
        & $\boldsymbol{\lambda}_7$
        & $\boldsymbol{\lambda}_8$ \\
        \midrule

        \multirow{8}{*}{Qwen3-1.7B}
        & C-STANCE
        & 0.3313 & 0.2977 & 0.0001 & 0.0134 & 0.0646 & 0.2884 & 0.0000 & 0.0045 \\
        & FOMC
        & 0.2173 & 0.3009 & 0.0164 & 0.0002 & 0.4052 & 0.0170 & 0.0000 & 0.0430 \\
        & MeetingBank
        & 0.0000 & 0.0011 & 0.9967 & 0.0000 & 0.0011 & 0.0000 & 0.0003 & 0.0006 \\
        & Py150
        & 0.0000 & 0.0000 & 0.0000 & 0.9962 & 0.0004 & 0.0014 & 0.0000 & 0.0015 \\
        & ScienceQA
        & 0.0112 & 0.0000 & 0.0000 & 0.0000 & 0.9445 & 0.0012 & 0.0007 & 0.0424 \\
        & NumGLUE-cm
        & 0.0004 & 0.0260 & 0.0016 & 0.0059 & 0.0087 & 0.8868 & 0.0636 & 0.0070 \\
        & NumGLUE-ds
        & 0.0002 & 0.0042 & 0.0000 & 0.0168 & 0.0317 & 0.0000 & 0.9471 & 0.0001 \\
        & 20Minuten
        & 0.1403 & 0.2323 & 0.0019 & 0.0368 & 0.0010 & 0.1004 & 0.0705 & 0.4168 \\
        \midrule

        \multirow{8}{*}{Qwen3-4B}
        & C-STANCE
        & 0.0767 & 0.0797 & 0.0957 & 0.4814 & 0.0292 & 0.0118 & 0.1497 & 0.0759 \\
        & FOMC
        & 0.0000 & 0.9310 & 0.0004 & 0.0000 & 0.0001 & 0.0084 & 0.0575 & 0.0027 \\
        & MeetingBank
        & 0.0249 & 0.0000 & 0.9634 & 0.0000 & 0.0000 & 0.0000 & 0.0032 & 0.0085 \\
        & Py150
        & 0.0607 & 0.0001 & 0.0379 & 0.8538 & 0.0054 & 0.0408 & 0.0000 & 0.0012 \\
        & ScienceQA
        & 0.0220 & 0.0000 & 0.0002 & 0.0048 & 0.9268 & 0.0079 & 0.0043 & 0.0339 \\
        & NumGLUE-cm
        & 0.1174 & 0.2825 & 0.0670 & 0.0016 & 0.0086 & 0.5036 & 0.0063 & 0.0129 \\
        & NumGLUE-ds
        & 0.0002 & 0.0345 & 0.0000 & 0.0160 & 0.0001 & 0.0025 & 0.9466 & 0.0002 \\
        & 20Minuten
        & 0.1589 & 0.2461 & 0.1227 & 0.0508 & 0.0356 & 0.0511 & 0.1885 & 0.1464 \\
        \midrule

        \multirow{8}{*}{Qwen3-8B}
        & C-STANCE
        & 0.0660 & 0.2528 & 0.1278 & 0.0516 & 0.4995 & 0.0006 & 0.0010 & 0.0006 \\
        & FOMC
        & 0.0002 & 0.9137 & 0.0004 & 0.0002 & 0.0000 & 0.0465 & 0.0000 & 0.0389 \\
        & MeetingBank
        & 0.0003 & 0.0002 & 0.9884 & 0.0000 & 0.0002 & 0.0000 & 0.0080 & 0.0029 \\
        & Py150
        & 0.0372 & 0.0000 & 0.0034 & 0.9253 & 0.0198 & 0.0085 & 0.0041 & 0.0017 \\
        & ScienceQA
        & 0.0001 & 0.0005 & 0.0000 & 0.0323 & 0.9030 & 0.0639 & 0.0002 & 0.0000 \\
        & NumGLUE-cm
        & 0.0830 & 0.0588 & 0.0211 & 0.1582 & 0.0070 & 0.3813 & 0.2833 & 0.0073 \\
        & NumGLUE-ds
        & 0.0156 & 0.0054 & 0.0223 & 0.0002 & 0.0003 & 0.0001 & 0.9208 & 0.0353 \\
        & 20Minuten
        & 0.0009 & 0.0254 & 0.0000 & 0.0012 & 0.1002 & 0.0000 & 0.1368 & 0.7355 \\

        \bottomrule
    \end{tabular}%
    }
\end{table*}

\subsection{Additional Experiments}\label{additionalexp}
This subsection provides complementary analyses of LAPS, including search-selected
task coordinates and convergence, robustness to task order, continual
adaptation dynamics, and the effect of learnable control prompts on space
geometry. Introducing learnable control prompts reshapes this fixed interpolation
geometry and yields larger cumulative gains from space search on both model
scales, supporting their role in creating more useful intermediate prompts.

\subsubsection{Task-Specific Coordinates Selected by BO}
\label{app:selected_lambda}

As shown in Table \ref{tab:selected_lambda}, we further report the simplex coordinates selected by Bayesian optimization
for every task. 
The selected coordinates reveal two characteristic behaviors. For several
tasks, including MeetingBank, Py150, ScienceQA, and NumGLUE-ds, search-selected coordinates remains
close to the corresponding task vertex across model scales. In contrast,
C-STANCE, NumGLUE-cm, and 20Minuten often select more distributed interior
coordinates, indicating that their high-performing prompts draw more strongly
on other task directions. The selected coordinates also vary substantially
across model scales, supporting task- and model-specific space search
rather than a single fixed combination.
\begin{figure}[t]
    \centering
    \includegraphics[width=0.95\linewidth]{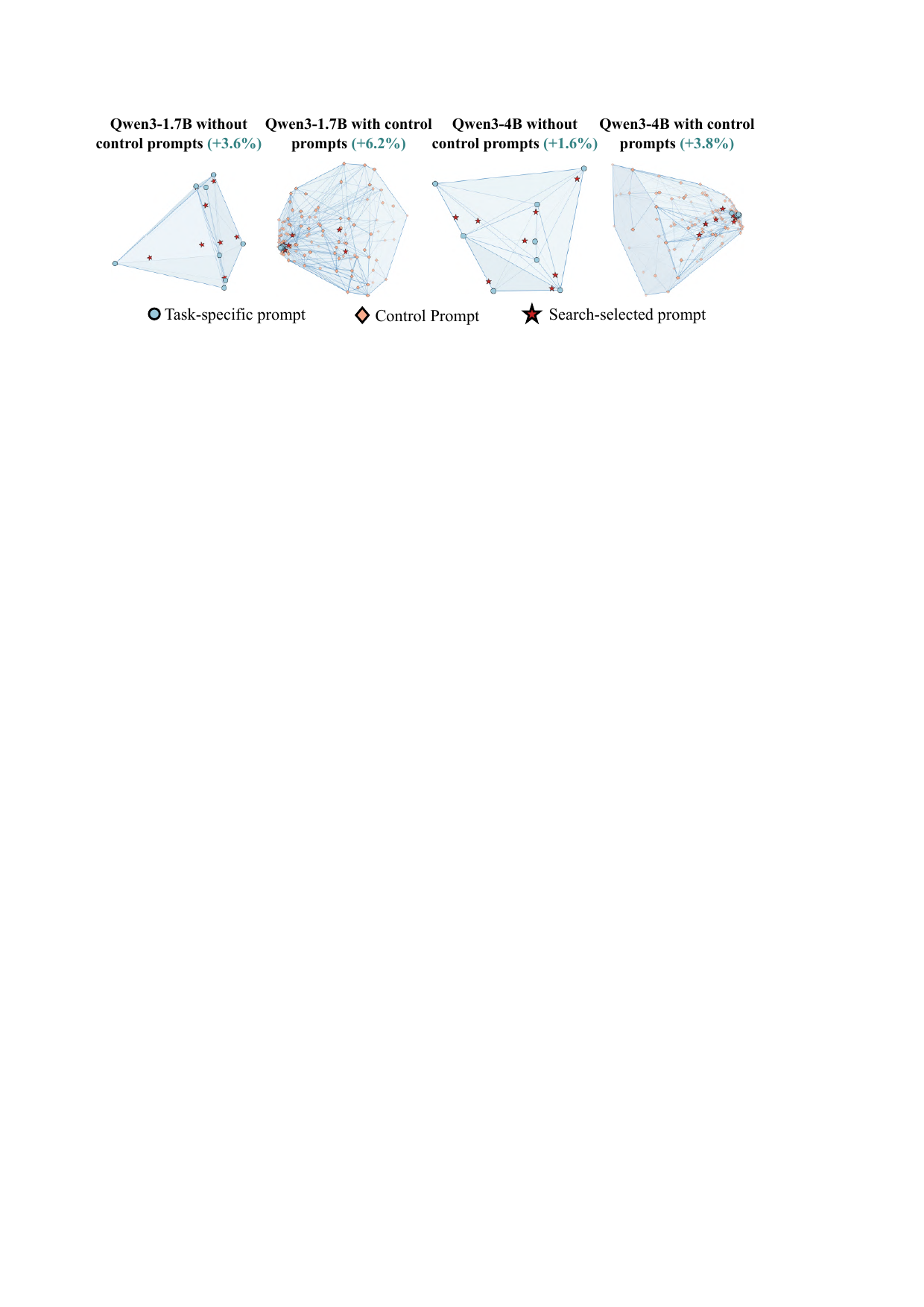}
    \vspace{-6pt}
    \caption{
    Visualization of the learned anchored prompt space with and without non-vertex
    control prompts on Qwen3-1.7B and Qwen3-4B. The values in green indicate
    the \textit{cumulative performance improvement} over the eight tasks after space
    search.
    }\label{fig:control_geometry_comparison}
\end{figure}
\begin{table*}[t]
    \centering
    \caption{
    Continual learning performance under the reverse task order
    (20Minuten $\rightarrow$ NumGLUE-ds $\rightarrow$ NumGLUE-cm
    $\rightarrow$ ScienceQA $\rightarrow$ Py150 $\rightarrow$ MeetingBank
    $\rightarrow$ FOMC $\rightarrow$ C-STANCE).
    }
    \label{tab:reverse_order}
    \vspace{-8pt}

    \scriptsize
    \setlength{\tabcolsep}{3.2pt}
    \renewcommand{\arraystretch}{1.10}

    \resizebox{\textwidth}{!}{%
    \begin{tabular}{lcccccccc|cc}
        \toprule
        \textbf{Method}
        & \textbf{20Minuten}
        & \textbf{NG-ds}
        & \textbf{NG-cm}
        & \textbf{ScienceQA}
        & \textbf{Py150}
        & \textbf{MeetingBank}
        & \textbf{FOMC}
        & \textbf{C-STANCE}
        & \textbf{Average} $\uparrow$
        & \textbf{BWT} $\uparrow$ \\
        \midrule

        \rowcolor{gray!15}
        \multicolumn{11}{l}{\textbf{Qwen3-1.7B}} \\

        SFT
        & $39.1$
        & $\mathbf{64.2}$
        & $54.2$
        & $18.2$
        & $58.6$
        & $43.8$
        & $52.3$
        & $50.8$
        & $47.6$
        & $-17.1$ \\

        Replay
        & $39.5$
        & $59.0$
        & $\mathbf{59.3}$
        & $68.8$
        & $58.8$
        & $56.0$
        & $62.3$
        & $51.4$
        & $56.9$
        & $-7.7$ \\

        ProgPrompt
        & $38.5$
        & $49.2$
        & $37.7$
        & $70.9$
        & $50.5$
        & $47.7$
        & $60.6$
        & $50.1$
        & $50.6$
        & $\mathbf{0.0}$ \\

        ProMoT
        & $39.7$
        & $61.4$
        & $30.6$
        & $75.6$
        & $59.4$
        & $\mathbf{61.9}$
        & $\mathbf{64.7}$
        & $\mathbf{52.2}$
        & $55.7$
        & $-4.2$ \\
        \noalign{\vskip 1pt}
        \hline
        \noalign{\vskip 1pt}
        \textbf{LAPS (Ours)}
        & $\mathbf{39.8}$
        & $56.3$
        & $50.9$
        & $\mathbf{76.5}$
        & $\mathbf{60.3}$
        & $61.7$
        & $62.4$
        & $51.3$
        & $\mathbf{57.4}$
        & $-2.7$ \\

        \midrule
        \rowcolor{gray!15}
        \multicolumn{11}{l}{\textbf{Qwen3-4B}} \\

        SFT
        & $38.6$
        & $71.2$
        & $65.7$
        & $9.8$
        & $63.1$
        & $61.4$
        & $53.9$
        & $56.3$
        & $52.5$
        & $-16.3$ \\

        Replay
        & $39.1$
        & $68.7$
        & $64.4$
        & $62.3$
        & $\mathbf{64.3}$
        & $63.3$
        & $60.1$
        & $\mathbf{56.8}$
        & $59.9$
        & $-7.7$ \\

        ProgPrompt
        & $39.4$
        & $52.3$
        & $56.5$
        & $\mathbf{81.6}$
        & $57.4$
        & $49.4$
        & $61.4$
        & $54.8$
        & $56.6$
        & $\mathbf{0.0}$ \\

        ProMoT
        & $40.2$
        & $73.2$
        & $\mathbf{69.1}$
        & $40.7$
        & $63.6$
        & $\mathbf{65.5}$
        & $67.4$
        & $55.3$
        & $59.4$
        & $-8.3$ \\
        \noalign{\vskip 1pt}
        \hline
        \noalign{\vskip 1pt}
        \textbf{LAPS (Ours)}
        & $\mathbf{40.6}$
        & $\mathbf{74.6}$
        & $62.2$
        & $62.7$
        & $62.6$
        & $65.1$
        & $\mathbf{71.7}$
        & $56.7$
        & $\mathbf{62.0}$
        & $-5.7$ \\

        \bottomrule
    \end{tabular}%
    }
\end{table*}

\subsubsection{Effect of Learnable Control Prompts on Space Geometry}
\label{app:control_geometry}

To further examine the role of non-vertex control prompts, we compare the
learned anchored space with and without them on Qwen3-1.7B and Qwen3-4B, as shown
in Fig.~\ref{fig:control_geometry_comparison}. Without control prompts, the space reduces to the convex simplex determined entirely by
the task-specific prompts. In this case, anchored prompt space search is restricted to the
fixed interpolation geometry spanned by the task anchors.
\subsubsection{Robustness to Task Order}
\label{app:reverse_order}

We further evaluate the robustness of LAPS under the reverse TRACE task order.
As shown in Table~\ref{tab:reverse_order}, reversing the task sequence changes
the difficulty of individual tasks and leads to noticeable performance shifts
across all methods. Nevertheless, LAPS achieves the highest Average performance
on both model scales, reaching $57.4\%$ on Qwen3-1.7B and $62.0\%$ on
Qwen3-4B. It also obtains BWT scores of $-2.7\%$ and $-5.7\%$, respectively,
substantially reducing forgetting compared with the parameter-updating
baselines SFT, Replay, and ProMoT. Although ProgPrompt achieves a BWT of
$0.0\%$ by keeping the backbone and previously learned prompts fixed, its
Average performance is considerably lower, at $50.6\%$ on Qwen3-1.7B and
$56.6\%$ on Qwen3-4B. These results show that LAPS consistently provides a stronger balance
between adaptation and retention across both model scales, indicating that its
advantages are robust to changes in task order.

\subsubsection{Convergence of Bayesian Optimization}
\label{app:bo_convergence}

Fig.~\ref{fig:bo_convergence} shows the convergence behavior of Bayesian
optimization across the eight tasks and three model scales. In most cases,
the best validation performance improves rapidly within the first few
evaluations and then gradually stabilizes, indicating that effective regions
of the learned anchored prompt space can be identified with a relatively small search
budget.


\begin{figure*}[!p]
    \centering
    \captionsetup{
        skip=2pt
    }

    \begin{minipage}{\textwidth}
        \centering
        \includegraphics[
            width=0.85\textwidth
        ]{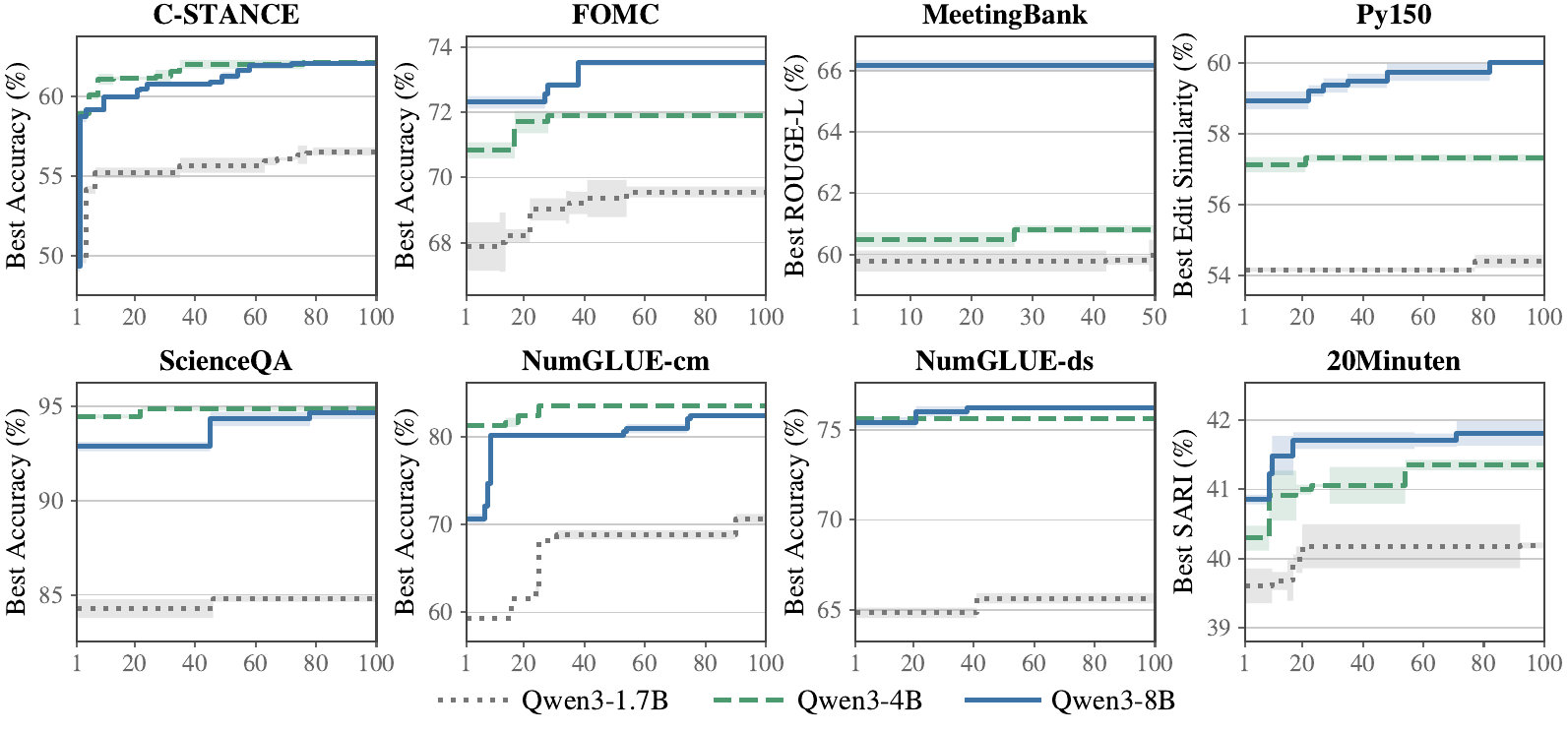}
        \captionof{figure}{
        Bayesian optimization convergence across eight TRACE tasks and three
        Qwen3 model scales. Curves report the best validation performance
        observed so far over successive evaluations.
        }
        \label{fig:bo_convergence}
    \end{minipage}

    \vspace{10pt}

    \begin{minipage}{0.85\textwidth}
        \centering
        \includegraphics[
            width=\textwidth
        ]{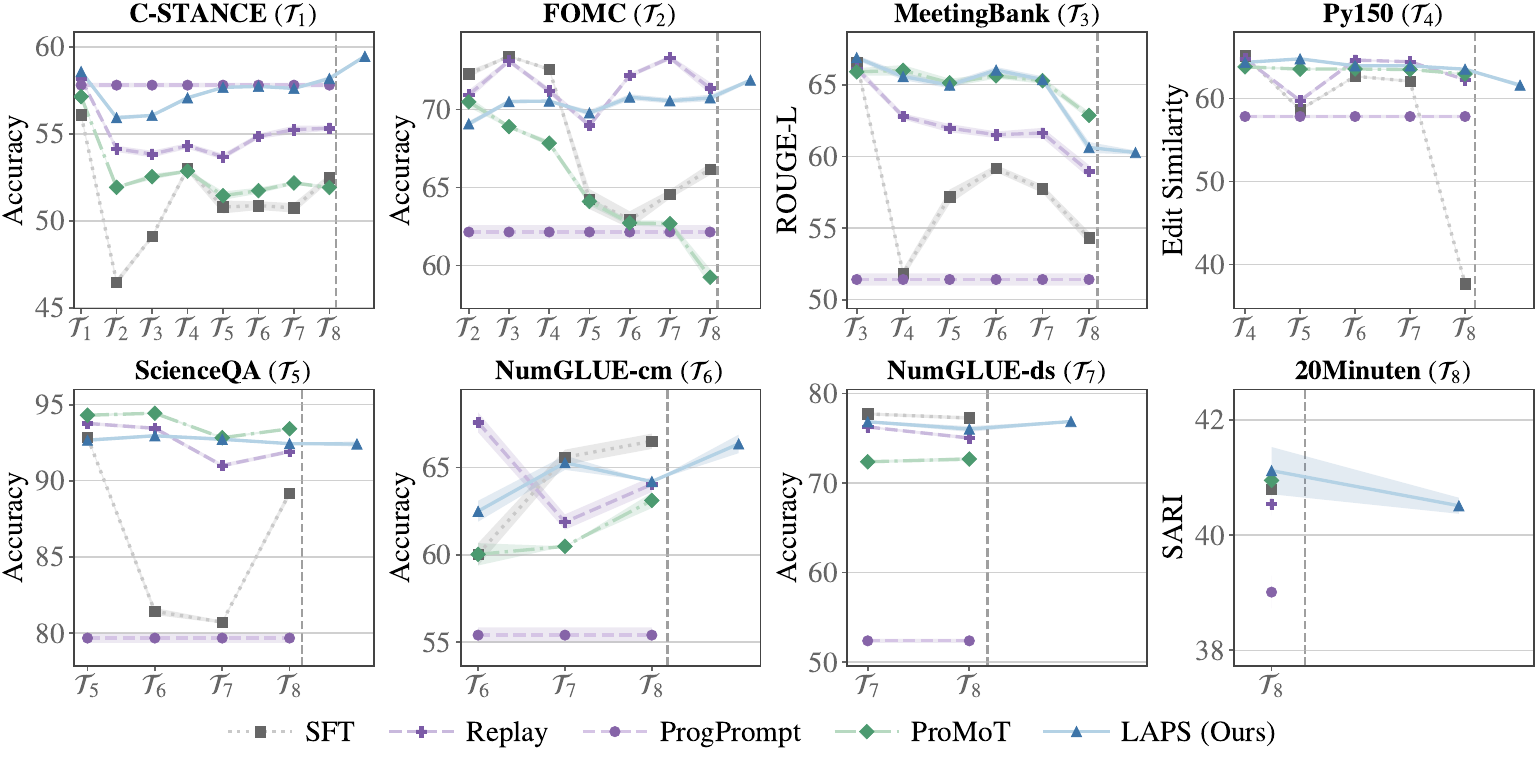}
        \captionof{figure}{
        Task-wise continual adaptation trajectories on TRACE with Qwen3-4B.
        }
        \label{fig:retention_4b}
    \end{minipage}

    \vspace{10pt}

    \begin{minipage}{0.85\textwidth}
        \centering
        \includegraphics[
            width=\textwidth
        ]{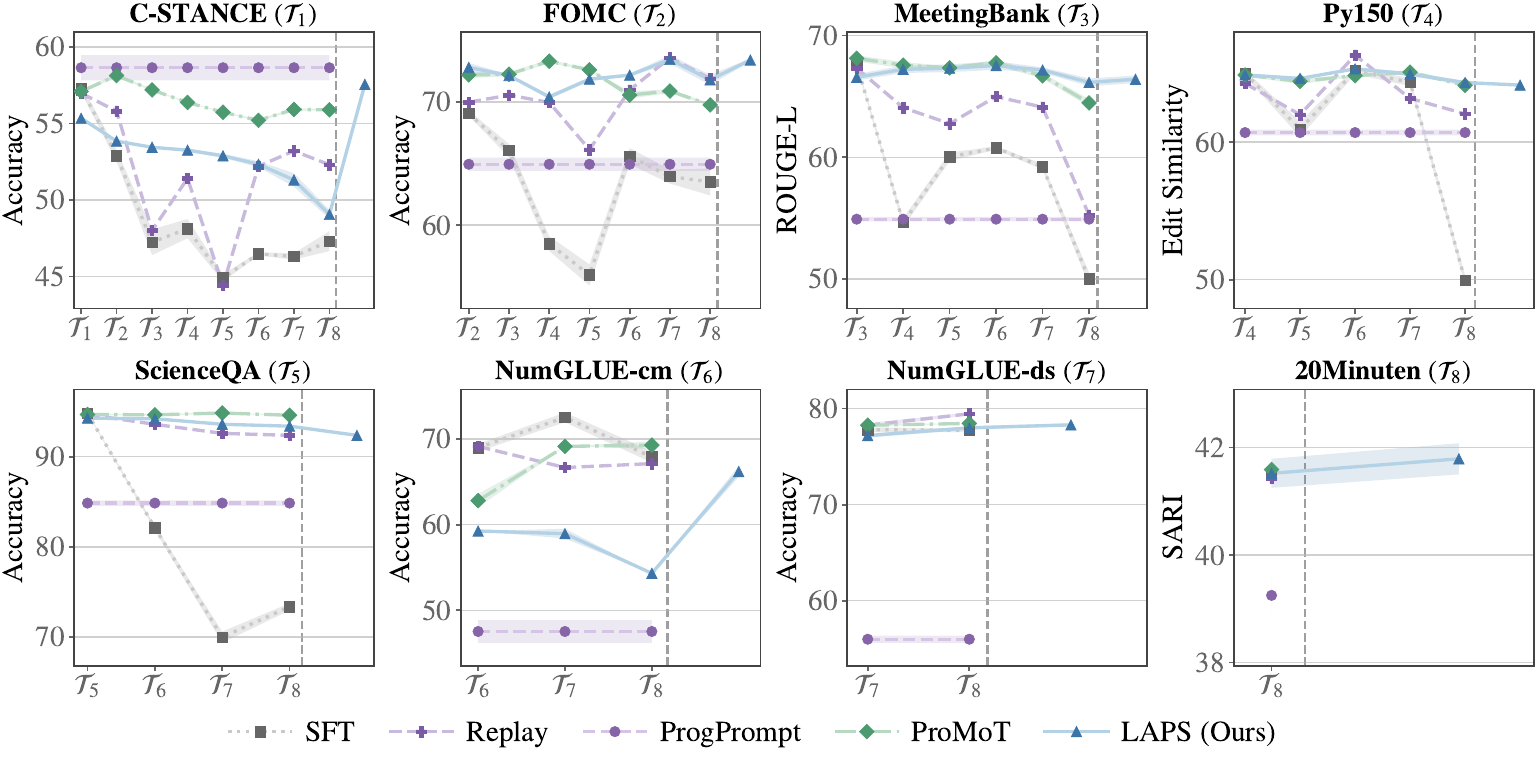}
        \captionof{figure}{
        Task-wise continual adaptation trajectories on TRACE with Qwen3-8B.
        }
        \label{fig:retention_8b}
    \end{minipage}
\end{figure*}

\subsubsection{Additional Continual Adaptation Dynamics}
\label{app:additional_dynamics}

The main text reports task-wise continual adaptation trajectories on
Qwen3-1.7B. Figures~\ref{fig:retention_4b} and
\ref{fig:retention_8b} extend this analysis to Qwen3-4B and Qwen3-8B.
Across both model scales, methods that update model parameters continue to
exhibit noticeable fluctuations on previously learned tasks, whereas LAPS
generally maintains more stable historical performance while preserving
strong adaptation to newly introduced tasks.


%


\subsubsection{Qualitative Analysis of Space Search}
\label{app:qualitative_manifold}
To further understand the effect of space search, we analyze qualitative
examples on Qwen3-8B by comparing the outputs obtained from all eight task
vertices with the search-selected prompt. The examples illustrate cases
where searching the learned anchored prompt space yields behaviors that cannot be obtained
from the task vertices alone. The MeetingBank example shows a different form of improvement. The original
MeetingBank vertex already contains the relevant information, but substantially
over-generates the transcript, resulting in a low ROUGE-L score. In contrast,
the search-selected point remains close to the MeetingBank vertex while producing
a much more concise summary, increasing the per-example ROUGE-L from $8.25$
at the MeetingBank vertex to $45.28$. This suggests that the anchored space learning can
refine not only task correctness but also the form in which task knowledge is
expressed. 

For the FOMC task, all eight task vertices predict the FOMC example as neutral, whereas the
search-selected interior point gives the correct hawkish prediction. Although the
selected coordinate remains close to the FOMC vertex, its small contributions
from other task directions are sufficient to induce a different model
behavior. This illustrates that space search can identify useful prompts
that are unavailable through vertex selection alone.



\begin{qualexample}{Example for MeetingBank}

\textbf{Query:}
Write a summary of the meeting transcript concerning access to low cost
veterinary care in Boston.

\medskip
\textbf{Reference:}
Order for a hearing regarding access to low cost veterinary care.
Remains in the Committee on City Services and Innovation Technology.

\medskip
\textbf{Responses:}

\renewcommand{\arraystretch}{1.08}
\setlength{\tabcolsep}{3pt}

\begin{tabularx}{\linewidth}{@{}p{0.15\linewidth}Xr@{}}
\toprule
\textbf{Prompt}
& \textbf{Generated Summary}
& \textbf{ROUGE-L} \\
\midrule

C-STANCE vertex
&
Councilor Braden proposed a hearing to address access to low cost veterinary
care for residents in public and subsidized affordable housing. The response
also discusses unequal access across neighborhoods and a One Health approach.
&
21.74
\\

FOMC vertex
&
The transcript discusses difficulty accessing low cost veterinary care,
particularly for residents in Brighton, and calls for collaboration between
city agencies and veterinary organizations.
&
16.16
\\

MeetingBank vertex
&
Order for a hearing regarding access to low cost veterinary care. The response
then reproduces a long portion of the meeting discussion, including details on
housing developments, veterinary providers, public health, and the One Health
approach.
&
8.25
\\

Py150 vertex
&
Councilor Braden called for a hearing on the difficulty faced by low income
residents in accessing affordable veterinary care and proposed considering a
holistic One Health approach.
&
20.95
\\

ScienceQA vertex
&
The hearing will address the problem of low income Boston residents lacking
access to affordable veterinary care and consider partnerships with veterinary
services.
&
28.57
\\

NumGLUE-cm vertex
&
A hearing was called to address difficulty accessing affordable veterinary
care in Boston, especially in Brighton, and to consider a One Health approach
linking human and animal health.
&
23.26
\\

NumGLUE-ds vertex
&
The hearing addresses unequal access to low cost veterinary care for residents
in public and subsidized housing and proposes a One Health framework for city
services.
&
20.56
\\

20Minuten vertex
&
A Boston City Council hearing was held on access to low cost veterinary care.
The response emphasizes unequal access and the potential use of a One Health
approach.
&
24.24
\\

\midrule

\textbf{Search-selected}
&
\textbf{Order for a hearing regarding Access to Low Cost Veterinary Care.
The Chair recognized Councilor Breadon. Councilor Breadon offered the
following: Order for a hearing regarding access to low cost veterinary care.}
&
\textbf{45.28}
\\

\bottomrule
\end{tabularx}

\medskip
\textbf{BO coordinate:}
\[
\boldsymbol{\lambda}^{\star}
=
[
0.0003,\,
0.0002,\,
0.9884,\,
0.0000,\,
0.0002,\,
0.0000,\,
0.0080,\,
0.0029
].
\]

\end{qualexample}

\begin{qualexample}{Example for FOMC}

\textbf{Query:}
What is the monetary policy stance for the following text?
A. dovish, B. hawkish, C. neutral.

\medskip
\textbf{Text:}
Of course, the basic problem in tackling the inflation of the 1970s is that
we have one observation and many competing theories.

\medskip
\textbf{Reference:} B (hawkish).

\medskip
\textbf{Predictions:}

\begin{center}
\setlength{\tabcolsep}{9pt}
\begin{tabular}{lcc}
\toprule
\textbf{Prompt} & \textbf{Prediction} & \textbf{Correct} \\
\midrule
C-STANCE vertex    & C (neutral) & No \\
FOMC vertex        & C (neutral) & No \\
MeetingBank vertex & C (neutral) & No \\
Py150 vertex       & C (neutral) & No \\
ScienceQA vertex   & C (neutral) & No \\
NumGLUE-cm vertex  & C (neutral) & No \\
NumGLUE-ds vertex  & C (neutral) & No \\
20Minuten vertex   & C (neutral) & No \\
\midrule
\textbf{Search-selected} & \textbf{B (hawkish)} & \textbf{Yes} \\
\bottomrule
\end{tabular}
\end{center}

\medskip
\textbf{BO coordinate:}
\[
\boldsymbol{\lambda}^{\star}
=
[
0.0002,\,
0.9137,\,
0.0004,\,
0.0002,\,
0.0000,\,
0.0465,\,
0.0000,\,
0.0389
].
\]

\end{qualexample}

\end{document}